\newfont{\mycrnotice}{ptmr8t at 7pt}
\newfont{\myconfname}{ptmri8t at 7pt}
\newtheorem{theorem}{Theorem}[section]
\newtheorem{lemma}{Lemma}[section]
\DeclareMathOperator*{\argmin}{arg\,min}
\begin{document}
\title{Robust Subspace Clustering via Tighter Rank Approximation}
\numberofauthors{3} 
%
\author{
%
%
\alignauthor
Zhao Kang\\
       \affaddr{Computer Science Dept. }\\
    \affaddr{Southern Illinois University}\\
     \affaddr{Carbondale, IL, USA}\\
    \email{zhao.kang@siu.edu}
\alignauthor
Chong Peng\\
    \affaddr{Computer Science Dept. }\\
    \affaddr{Southern Illinois University}\\
     \affaddr{Carbondale, IL, USA}\\
       \email{pchong@siu.edu}
\alignauthor 
Qiang Cheng\\
    \affaddr{Computer Science Dept. }\\
    \affaddr{Southern Illinois University}\\
     \affaddr{Carbondale, IL, USA}\\
      \email{qcheng@cs.siu.edu}
}

\maketitle
\begin{abstract}
Matrix rank minimization problem is in general NP-hard. The nuclear norm is used to substitute the rank function in many recent studies. Nevertheless, the nuclear norm approximation adds all singular values together and the approximation error may depend heavily on the magnitudes of singular values. This might restrict its capability in dealing with many practical problems. In this paper, an arctangent function is used as a tighter approximation to the rank function. We use it on the challenging subspace clustering problem. For this nonconvex minimization problem, we develop an effective optimization procedure based on a type of augmented Lagrange multipliers (ALM) method. Extensive experiments on face clustering and motion segmentation show that the proposed method is effective for rank approximation. 
\end{abstract}

\category{I.5}{Pattern recognition}{Clustering}[Algorithm]
\category{G.1.6}{Optimization}{Constrained optimization}


\keywords{Subspace Clustering; Rank Minimization; Nuclear Norm; Nonconvex Optimization}

\section{Introduction}
\label{intro}
Matrix rank minimization arises in control, machine learning, signal processing and other areas \cite{zhao2012approximation}. It is difficult to solve due to the discontinuity
and nonconvexity of the rank function. Existing algorithms are largely based on the nuclear norm heuristic, i.e., to replace the rank by the nuclear norm \cite{fazel2002matrix}. The nuclear norm of a matrix $X$, denoted by $\left\|X\right\|_*$, is the sum of all its singular values, i.e., $\left\|X\right\|_*=\sum_i \sigma_i(X)$. Under some conditions, the solution to the nuclear norm heuristic coincides with the minimum rank solution \cite{recht2010guaranteed,recht2011null}. However, since the nuclear norm is the convex envelop of rank($X$) over the unit ball $\lbrace X: \left\|X\right\|_2\leq1\rbrace$, it may deviate from the rank of $X$ in many circumstances \cite{candes2010power,candes2009exact}.  The rank function counts the number of nonvanishing singular values, while the nuclear norm sums their amplitudes. As a result, the nuclear norm may be dominated by a few very large singular values. Variations of standard nuclear norm are shown to be promising in some recent research \cite{hu2013fast,cai2010singular,nie2012low}. A number of nonconvex surrogate functions have come up to better approximate the rank function, such as Logarithm Determinant \cite{fazel2002matrix,kang2015logdet}, Schatten-$p$ norm \cite{mohan2012iterative}, truncated nuclear norm \cite{hu2013fast} and others \cite{lu2014generalized}.
In general, they are to solve the following low-rank minimization problem: 
\begin{equation}
\label{generaldefinite}
\min_Z \sum_{i=1}^{\min (m, n)}h(\sigma_i(Z))+\lambda g(Z),
\end{equation}
where $\sigma_i(Z)$ denotes the $i$-th singular value of $Z\in \mathbf{\mathcal{R}}^{m\times n}$, $h(\cdot)$ is a potentially nonconvex, nonsmooth function, and $g(\cdot)$ is a loss function. By choosing $h(z)=z$, the summation of the first term in (\ref{generaldefinite}) goes back to the nuclear norm $\left\|Z\right\|_*$, problem (\ref{generaldefinite}) becomes the well known convex relaxation of the rank minimization problem:
\begin{equation}
\label{rank}
\min_Z\hspace{.2cm} \|Z\|_*+\lambda g(Z).
\end{equation}
In this paper, we will propose a new nonconvex rank approximation and consider subspace clustering as a specific application.  

\subsection{Previous Work on Subspace Clustering}
In many real-world applications, high-dimensional data reside in a union of multiple low-dimensional subspaces rather than one single low-dimensional subspace \cite{elhamifar2009sparse}. Subspace clustering deals with exactly this structure by clustering data points according to their underlying subspaces. It has numerous applications in computer vision \cite{rao2010motion} and image processing \cite{ma2007segmentation}. Therefore subspace clustering has drawn significant attention in recent years \cite{vidal2010tutorial}. In practice, the underlying subspace structure is often corrupted by noise and outliers, and thus the data may deviate from the original subspaces. It is necessary to develop robust estimation techniques. 

A number of approaches to subspace clustering have been proposed in the past two decades. According to the survey in \cite{vidal2010tutorial}, they can be roughly divided into four categories: 1) algebraic methods; 2) iterative methods; 3) statistical methods; and 4) spectral clustering-based methods. Among them, spectral clustering-based methods have obtained state-of-the-art results, including sparse subspace clustering (SSC) \cite{elhamifar2013sparse}, and low rank representation (LRR) \cite{liu2013robust}. They perform subspace clustering in two steps: first, learning an affinity matrix that encodes the subspace membership information, and then applying spectral clustering algorithms \cite{shi2000normalized, ng2002spectral} to the learned affinity matrix to obtain the final clustering results. Their main difference is how to obtain a good affinity matrix.

SSC assumes that each data point can be represented as a sparse linear combination of other points. The popular $l_1$-norm heuristic is used to capture the sparsity. It enjoys great performance for face clustering and motion segmentation data. Now we have a good theoretical understanding about SSC. For instance, \cite{elhamifar2010clustering} shows that disjoint subspaces can be exactly recovered under certain conditions; geometric analysis of SSC \cite{soltanolkotabi2012geometric} significantly broadens the scope of SSC to intersecting subspaces. However, the data points are assumed to be lying exactly in the subspace. This assumption may be violated in the presence of corrupted data. \cite{wang2013noisy} extends SSC by adding adversarial or random noise. However, SSC's solution might be too sparse, thus the affinity graph from a single subspace will not be a fully connected body \cite{nasihatkon2011graph}. To address the above issue, another regularization term is introduced to promote connectivity of the graph \cite{elhamifar2013sparse}.

LRR also represents each data point as a linear combination of other points. It is to find the lowest rank representation $Z$ of all data points jointly, where the nuclear norm is used as a common surrogate of the rank function. In the presence of noise or outliers, LRR solves the following problem:
\begin{equation}
\min_{Z, E} \left\|Z\right\|_*+\lambda \left\|E\right\|_l \hspace{.5cm} s.t. \hspace{.5cm} X=XZ+E,
\end{equation}
where $\lambda$ balances the effects of the low rank representation and errors, $X=[x_1, \cdots, x_n]\in \mathbf{\mathcal{R}}^{m\times n}$ is a set of $m$-dimensional data vectors drawn from the union of $k$ subspaces $\lbrace S_i \rbrace _{i=1}^k$, and $\left\| \cdot\right\|_l$ characterizes certain corruptions $E$. For example, when $E$ represents Gaussian noise, squared Frobenius norm $\|E\|_F^2=\sum\limits_{i=1}^m \sum\limits_{j=1}^n E_{ij}^2$ is used; when $E$ denotes random corruptions, $l_1$ norm $\|E\|_1=\sum\limits_{i=1}^m \sum\limits_{j=1}^n |E_{ij}|$ is appropriate; when $E$ indicates the sample-specific corruptions, $l_{2,1}$ norm is adopted, where $\|E\|_{2,1}=\sum\limits_{j=1}^n \sqrt{\sum\limits_{i=1}^m E_{ij}^2}$. 
The low rank as well as sparsity requirement may help counteract corruptions.
A variant of LRR works even in the presence of some arbitrarily large outliers \cite{liu2011exact}. However, LRR has never been shown to succeed other than under strong ``independent subspace'' condition \cite{kanatani2001motion}.

In view of the issues with the nuclear norm mentioned in the beginning, we propose the use of an arctangent function instead in this work. We demonstrate the enhanced performance of the proposed algorithm on benchmark data sets. 
\subsection{Our Contributions}
 In summary, the main contributions of this paper are threefold:
\begin{itemize}
\item{More accurate rank approximation is proposed to obtain the low-rank representation of high-dimensional data.}
\item{An efficient optimization procedure is developed for arctangent rank minimization (ARM) problem. Theoretical analysis shows that our algorithm converges to a stationary point.}
\item{The superiority of the proposed method to various state-of-the-art subspace clustering algorithms is verified with significantly and consistently lower error rates of ARM on popular datasets.}
\end{itemize}

\begin{figure}[h]
\vskip 0.2in
\begin{center}
\includegraphics[width=.48\columnwidth]{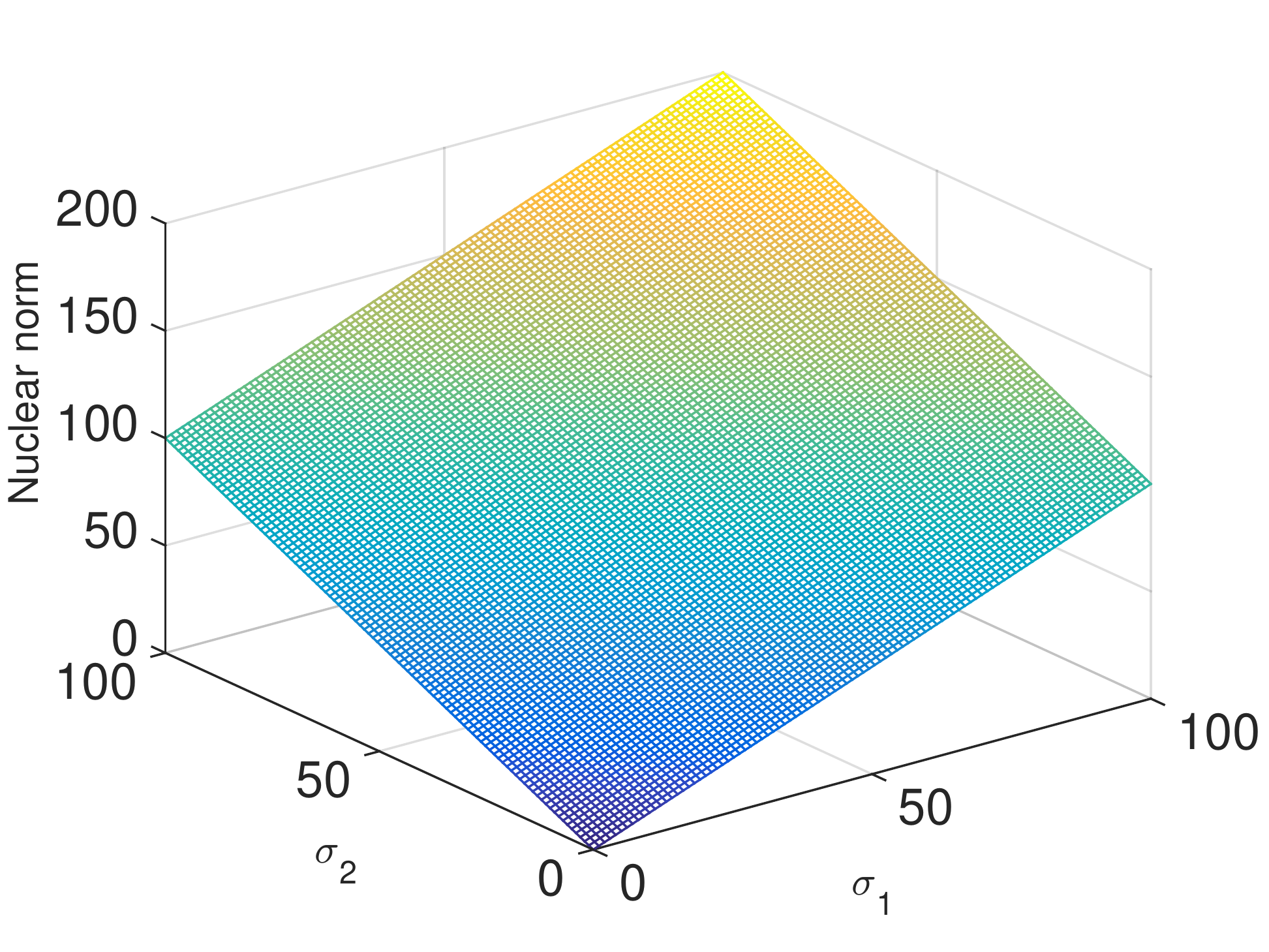}
\includegraphics[width=.48\columnwidth]{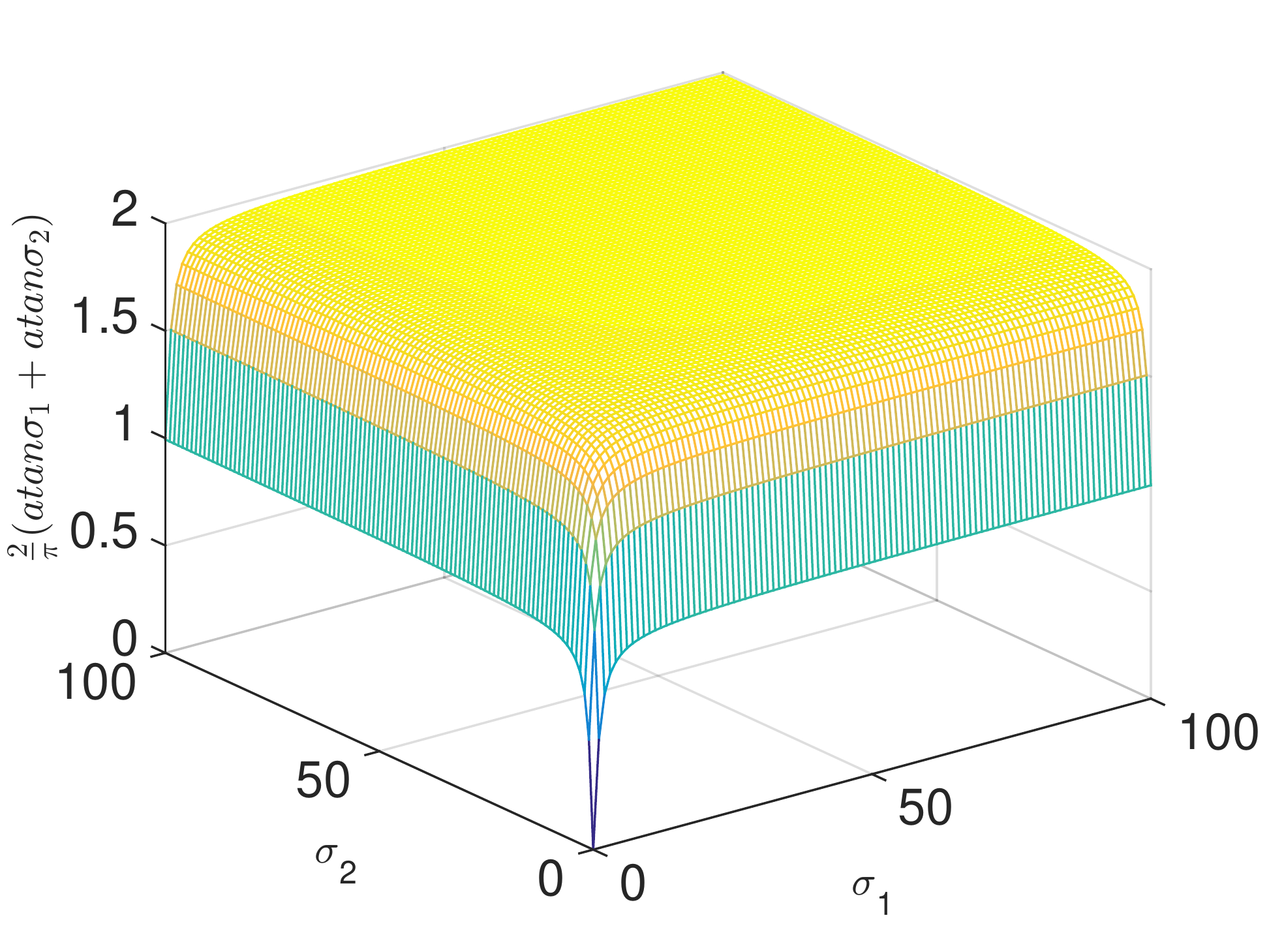}
\caption{Comparison of approximation for rank 2.}
\label{threed}
\end{center}
\vskip -0.2in
\end{figure} 
\section{Subspace clustering by ARM}
In this work, we demonstrate the application of $F(Z)=f\circ\sigma(Z)$, where $f(x)=\sum_i\arctan(|x_i|)$ for any $x \in {\mathcal{R}}^n$ (for high dimensional data usually $m >> n$ which is the case we suppose in the paper), as a rank approximation of matrix $Z$ in subspace clustering setting.
There are three advantages of this approximation function. First, it approximates rank($Z$) much better than the nuclear norm does, i.e., as $\sigma_i \in [0, \infty]$, $\frac{2}{\pi}\arctan (\sigma_i)\in[0,1] $. Figure~\ref{threed} shows the rank approximation value of the two approaches for rank 2 situation. We can clearly see that arctangent reflects the real rank pretty well on a broad range of singular values. Second, $f$ is differentiable, concave and monotonically increasing on $[0, \infty]^n$, by defining the gradient of $f$ at 0 as $\frac{\partial}{\partial x_i} f(0) =\lim_{x_i \rightarrow 0^+} \frac{1}{1+x_i^2} = 1$. Third, $F$ is unitarily invariant and $f$ is absolutely symmetric, i.e., $f(x)$ is invariant under arbitrary permutation and sign changes of the components of $x$. Based on these properties, we have the following theorems, which is proved in Appendix A.

\begin{theorem}
\label{firsthmm}
For $\mu>0$ and $A\in \mathbf{\mathcal{R}}^{m\times n} $
, the following problem
\begin{equation}
Z^*=\argmin_Z F(Z)+\frac{\mu}{2}\left\|Z-A\right\|_F^2
\label{theoremprob}
\end{equation}
is solved by the vector minimization 
\begin{equation}
\label{vectorf}
\sigma^*=\argmin_{\sigma\geq0} f(\sigma)+\frac{\mu}{2}\|\sigma-\sigma_A\|_2^2,
\end{equation}
so that $Z^* = U diag(\sigma^*) V^T $ with the SVD of $A$ being\\
 $U diag(\sigma_A) V^T$. 

\end{theorem}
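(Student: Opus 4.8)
The plan is to reduce the matrix problem (\ref{theoremprob}) to the vector problem (\ref{vectorf}) by combining the stated unitary invariance of $F$ with von Neumann's trace inequality. First I would expand the quadratic term as $\|Z-A\|_F^2 = \|Z\|_F^2 - 2\,\mathrm{tr}(Z^T A) + \|A\|_F^2 = \sum_i \sigma_i(Z)^2 - 2\,\mathrm{tr}(Z^T A) + \sum_i \sigma_i(A)^2$, using $\|Z\|_F^2=\sum_i\sigma_i(Z)^2$. Since $F(Z)=f(\sigma(Z))$ depends on $Z$ only through its singular values, the sole term in the objective that depends on the singular \emph{vectors} of $Z$ is $-2\,\mathrm{tr}(Z^T A)$. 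Hence, for any fixed spectrum $\sigma(Z)$, minimizing over the singular vectors of $Z$ is equivalent to maximizing $\mathrm{tr}(Z^T A)$.

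Second, I would invoke von Neumann's trace inequality: for any $Z,A$ with singular values arranged in decreasing order, $\mathrm{tr}(Z^T A)\le \sum_i \sigma_i(Z)\,\sigma_i(A)$, with equality precisely when $Z$ and $A$ admit a simultaneous singular value decomposition, i.e.\ $Z=U\,\mathrm{diag}(\sigma(Z))\,V^T$ for the same $U,V$ realizing the SVD of $A$, with $\sigma(Z)$ ordered to match $\sigma_A$. This forces an optimal $Z$ to share the singular vectors of $A$, so we may restrict to $Z=U\,\mathrm{diag}(\sigma)\,V^T$. Substituting, the objective collapses to $f(\sigma)+\frac{\mu}{2}\sum_i(\sigma_i-\sigma_{A,i})^2 = f(\sigma)+\frac{\mu}{2}\|\sigma-\sigma_A\|_2^2$, which is exactly (\ref{vectorf}). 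Conversely, any minimizer $\sigma^*$ of (\ref{vectorf}) yields a feasible $Z^*=U\,\mathrm{diag}(\sigma^*)V^T$ attaining this reduced value, so the matrix minimum equals the vector minimum and is attained at $Z^*$.

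Third, I would close the ordering gap, which is where the main care is needed. Problem (\ref{vectorf}) is posed over all $\sigma\ge 0$ with no monotonicity constraint, whereas the equality case of von Neumann's inequality requires $\sigma^*$ to be ordered consistently with $\sigma_A$. Here the hypotheses on $f$ do the work: because $f$ is absolutely symmetric it is permutation invariant, and by the rearrangement inequality the term $-\mu\sum_i\sigma_i\sigma_{A,i}$ (equivalently $\frac{\mu}{2}\|\sigma-\sigma_A\|_2^2$) is smallest when $\sigma$ is sorted in the same order as $\sigma_A$. Thus any minimizer of (\ref{vectorf}) can be permuted into the matching order without increasing the objective, so a sorted minimizer exists and $U\,\mathrm{diag}(\sigma^*)V^T$ is a bona fide SVD; the unitary invariance of $F$ guarantees its value is unchanged under this reordering.

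The main obstacle I anticipate is making the equality analysis of von Neumann's inequality airtight simultaneously with the rearrangement step: one must ensure that the spectrum minimizing the full objective and the spectrum aligned with $\sigma_A$ can be chosen to coincide, rather than merely that each piece is individually optimal. The absolute symmetry of $f$ is precisely what lets these two reductions be carried out compatibly. Existence of the minimizers follows from coercivity, since $f\ge 0$ and the quadratic term tends to $\infty$ as $\|\sigma\|\to\infty$; I would include this routine check for completeness.
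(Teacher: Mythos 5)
Your proposal is correct and follows essentially the same route as the paper's proof in Appendix A: reduce to the diagonal case via unitary invariance of $F$ and the Frobenius norm, expand the quadratic, and apply von Neumann's trace inequality with equality when $Z$ shares the singular vectors of $A$. You are somewhat more careful than the paper about the ordering/permutation of $\sigma^*$ and about existence via coercivity, but these are refinements of the same argument rather than a different approach.
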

\subsection{Arctangent Rank Minimization}
To demonstrate the effectiveness of the arctangent rank approximation, we consider its application in the challenging subspace clustering problem. We propose the following arctangent rank minimization (ARM) problem:
\begin{equation}
\label{original}
\min_{Z, E} \hspace{.1cm} \sum_{i=1}^n arctan(\sigma_i(Z))+\lambda\left\|E\right\|_{l}\hspace{.2cm} s.t.\hspace{.2cm} X=XZ+E.
\end{equation} 
It is difficult to solve (\ref{original}) directly because the objective function is neither convex nor concave. We convert it to the following equivalent problem:
\begin{equation}
\label{prob}
\min_{Z, E, J} \hspace{.01cm} \sum_{i=1}^n arctan(\sigma_i(J))+\lambda \left\|E\right\|_l\hspace{.02cm}s.t.\hspace{.02cm} X=XZ+E,\hspace{.02cm}Z=J.
\end{equation}
Now we resort to a type of augmented Lagrange multipliers (ALM) \cite{lin2011linearized} method to solve (\ref{prob}). For simplicity of notation, we denote $\sigma_i=\sigma_i(J)$ and $\sigma_i^t=\sigma_i(J^t)$. The corresponding augmented Lagrangian function is:
\begin{equation}
\begin{split}
&L(E, J, Y_1, Y_2, Z, \mu)=\sum_{i=1}^n arctan(\sigma_i)+\lambda \left\|E\right\|_l\\
&+Tr(Y_1^T(X-XZ-E))+Tr(Y_2^T(J-Z))\\
&+\frac{\mu}{2}(\left\|X-XZ-E\right\|_F^2+\left\|J-Z\right\|_F^2),
\end{split}
\end{equation}
where $\mu>0$ is a penalty parameter and $Y_1$, $Y_2$ are Lagrangian multipliers. The variables $E$, $J$, and $Z$ can be updated alternatively, one at each step, while keeping the other two fixed. For the ($t+1$)th iteration, the iterative scheme is given as follows.

For $Z^{t+1}$,  by fixing $E^t$, $J^t$, $Y_1^t$ and $Y_2^t$, we have:
\begin{equation}
\begin{split}
Z^{t+1}&= \argmin_Z \hspace{.1cm} Tr[(Y_1^t)^T(X-XZ-E^{t})]+\\
&Tr[(Y_2^t)^T(J^{t}-Z)]+\\
&\frac{\mu^t}{2}(\left\|X-XZ-E^{t}\right\|_F^2+\left\|J^{t}-Z\right\|_F^2).
\end{split}
\label{upZ}
\end{equation}
It is evident that the objective function of (\ref{upZ}) is a strongly convex quadratic function which can be solved directly. By setting the first derivative of it to zero, we have:
\begin{equation}
\label{solveZ}
Z^{t+1}=(I+X^TX)^{-1}[X^T(X-E^{t})+J^{t}+\frac{X^TY_1^t+Y_2^t}{\mu^t}],
\end{equation}
 where $I\in \mathbf{\mathcal{R}}^{n\times n}$ is the identity matrix.

For $J^{t+1}$, we have: 
\begin{equation}
\label{arct}
J^{t+1}= \argmin_J \hspace{.2cm}\sum_{i=1}^n arctan(\sigma_i)+
\frac{\mu^t}{2}\|J-(Z^{t+1}-\frac{1}{\mu^t}Y_2^t)\|_F^2.
\end{equation}
Then we can convert it to problem (\ref{vectorf}). The first term in (\ref{vectorf}) is concave while the second term convex in $\sigma$, so we can apply difference of convex (DC) \cite{horst1999dc} (vector) optimization method. A linear approximation is used at each iteration of DC programing. At iteration $k+1$,  
\begin{equation}
\label{vectorized}
\sigma_{k+1}=\argmin_{\sigma\geq 0} \quad \langle w_k,\sigma \rangle+ \frac{\mu^t}{2}\|\sigma-\sigma_A\|_2^2,
\end{equation}
whose closed-form solution is 
\begin{equation}
\label{optisigma} 
\sigma_{k+1}=(\sigma_A-\frac{\omega_k}{\mu^t})_+,
\end{equation}
where $\omega_k=\partial f(\sigma_k)$ is the gradient of $f(\cdot)$ at $\sigma_k$ and the SVD of $Z^{t+1}-\frac{Y_2^t}{\mu^t}$ is $U diag\lbrace\sigma_A\rbrace V^T$. Finally, it converges to a local optimal point $\sigma^*$. Then $J^{t+1}=U diag\lbrace\sigma^*\rbrace V^T$.

For $E^{t+1}$, we have the following subproblem:
\begin{equation}
\begin{split}
E^{t+1}&= \argmin_E \hspace{.1cm}\lambda \left\|E\right\|_l+\frac{\mu^t}{2}\|X-XZ^{t+1}-E\|_F^2\\
&+Tr[(Y_1^{t})^T(X-XZ^{t+1}-E)].
\end{split}
\end{equation}
Depending on different regularization strategies, we have different closed-form solutions. For squared Forbenius norm, it is again a quadratic problem, 
\begin{equation}
\label{error2}
E^{t+1}=\frac{Y_1^{t}+\mu^t(X-XZ^{t+1})}{\mu^t+2\lambda}.
\end{equation}
For $l_1$ and $l_{2,1}$ norm, we use the lemmas from Appendix B. 
Let $Q=X-XZ^{t+1}+\frac{Y_1^t}{\mu^t}$, we can solve $E$ element-wisely as below:  
\begin{eqnarray}
\label{error1}
E_{ij}^{t+1} = \left\{
\begin{array}{ll} Q_{ij}-\frac{\lambda}{\mu^t}sgn(Q_{ij}) , 
&\mbox{if $ |Q_{ij}| <\frac{\lambda}{\mu^t}$};\\
0, &\mbox{otherwise.}
\end{array}\right. 
\end{eqnarray}
In the case of $l_{2,1}$ norm, we have
\begin{eqnarray}
\label{error21}
[E^{t+1}]_{:,i}=\left\{
\begin{array}{ll} \frac{\left\|Q_{:,i}\right\|_2-\frac{\lambda}{\mu^t}}{\left\|Q_{:,i}\right\|_2}Q_{:,i}, & \mbox{if $\left\|Q_{:,i}\right\|_2>\frac{\lambda}{\mu^t}$};\\
0, & \mbox{otherwise.}
\end{array}\right.
\end{eqnarray}
The update of Lagrange multipliers is: 
\begin{align}
\label{multi1}
Y_1^{t+1}&=Y_1^t+\mu^t(X-XZ^{t+1}-E^{t+1}),\\
\label{multi2}
Y_2^{t+1}&=Y_2^t+\mu^t(J^{t+1}-Z^{t+1}).
\end{align}
The procedure is outlined in Algorithm 1. 
\begin{algorithm}[tb]
   \caption{Arctan Rank Minimization}
   \label{alg:rankminimization}
    {\bfseries Input:} data matrix $X\in \mathbf{\mathcal{R}}^{m\times n}$, parameters $\lambda>0, \mu^0>0$, and $\rho>1$.\\
 {\bfseries Initialize:} $J=I\in \mathbf{\mathcal{R}}^{n\times n}$, $E=0$, $ Y_1=Y_2=0$.\\
  {\bfseries REPEAT}
\begin{algorithmic}[1]
  \STATE Update $Z$ by (\ref{solveZ}). 
  \STATE Solve (\ref{arct}). \\
   \STATE Solve $E$ by either (\ref{error2}), (\ref{error1}) or (\ref{error21}) according to  $l$.
\STATE Update $Y_1$ by (\ref{multi1}) and $Y_2$ by (\ref{multi2}). 
\STATE Update $\mu$ by $\mu^{t+1}=\rho\mu^t$.
\end{algorithmic}
   \textbf{UNTIL} {stopping criterion is met.}
\end{algorithm}
 
\begin{figure}[h]
\vskip 0.2in
\begin{center}
\centerline{\includegraphics[width=\columnwidth]{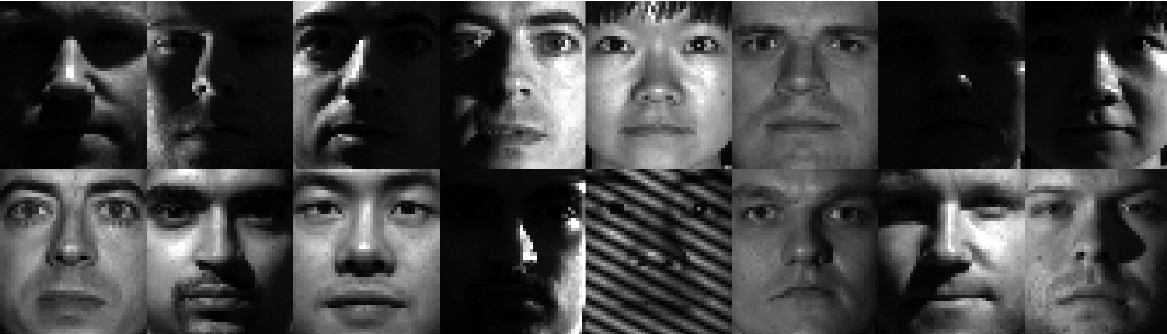}}
\caption{Sample face images in Extended Yale B.}
\label{examples}
\end{center}
\vskip -0.2in
\end{figure} 

\subsection{Affinity Graph Construction} 
After obtaining optimal $Z^*$, we can build the similarity graph matrix $W$. As argued in \cite{elhamifar2013sparse}, some postprocessing of the coefficient matrix can improve the clustering performance. Following the angular information based technique of \cite{liu2013robust}, we define $\widetilde{U}=U\Sigma^{1/2}$, where $U$ and $\Sigma$ are from the skinny $SVD$ of $Z^*=U\Sigma V^T$. 
Inspired by \cite{lauer2009spectral}, we define $W$ as follows:
\begin{equation}
\label{graphmatrix}
W_{ij}=(\frac{\widetilde{u}_i^T\widetilde{u}_j}{\left\|\widetilde{u}_i\right\|_2\left\|\widetilde{u}_j\right\|_2})^{2\alpha},
\end{equation}
where $\widetilde{u}_i$ and $\widetilde{u}_j$ denote the $i$-th and $j$-th columns of $\widetilde{U}$, and $\alpha \in {\mathcal{N^*}}$ controls the sharpness of the affinity between two points. Increasing the power $\alpha$ enhances the separation ability in the presence of noise. However, an excessively large $\alpha$ would break affinities between points of the same group. In order to compare with LRR\footnote{As we confirmed with an author of \cite{liu2013robust}, the power 2 of its equation (12) is
a typo, which should be 4.}, we use $\alpha=2$ in our experiments, then we have the same postprocessing procedure as LRR.
After obtaining $W$, we directly utilize a spectral clustering algorithm NCuts \cite{shi2000normalized} to cluster the samples. Algorithm 2 summarizes the complete subspace clustering steps of the proposed method.
\begin{algorithm}[tb]
\label{alg:completealgo}
   \caption{Subspace Clustering by ARM}
 {\bfseries Input:} data matrix $X$, number of subspaces $k$.\\
  {\bfseries Do}
\begin{algorithmic}[1]
\STATE Obtain optimal $Z^*$ by solving (\ref{prob}).
\STATE Compute the skinny SVD $Z^*=U \Sigma V^T$.
\STATE Calculate $\widetilde{U}=U(\Sigma)^{1/2}$.
\STATE Construct the affinity graph matrix $W$ by (\ref{graphmatrix}).
\STATE Perform NCuts on $W$.
\end{algorithmic}
\end{algorithm} 
\begin{figure}[ht]
\vskip 0.2in
\begin{center}
\includegraphics[width=.45\columnwidth]{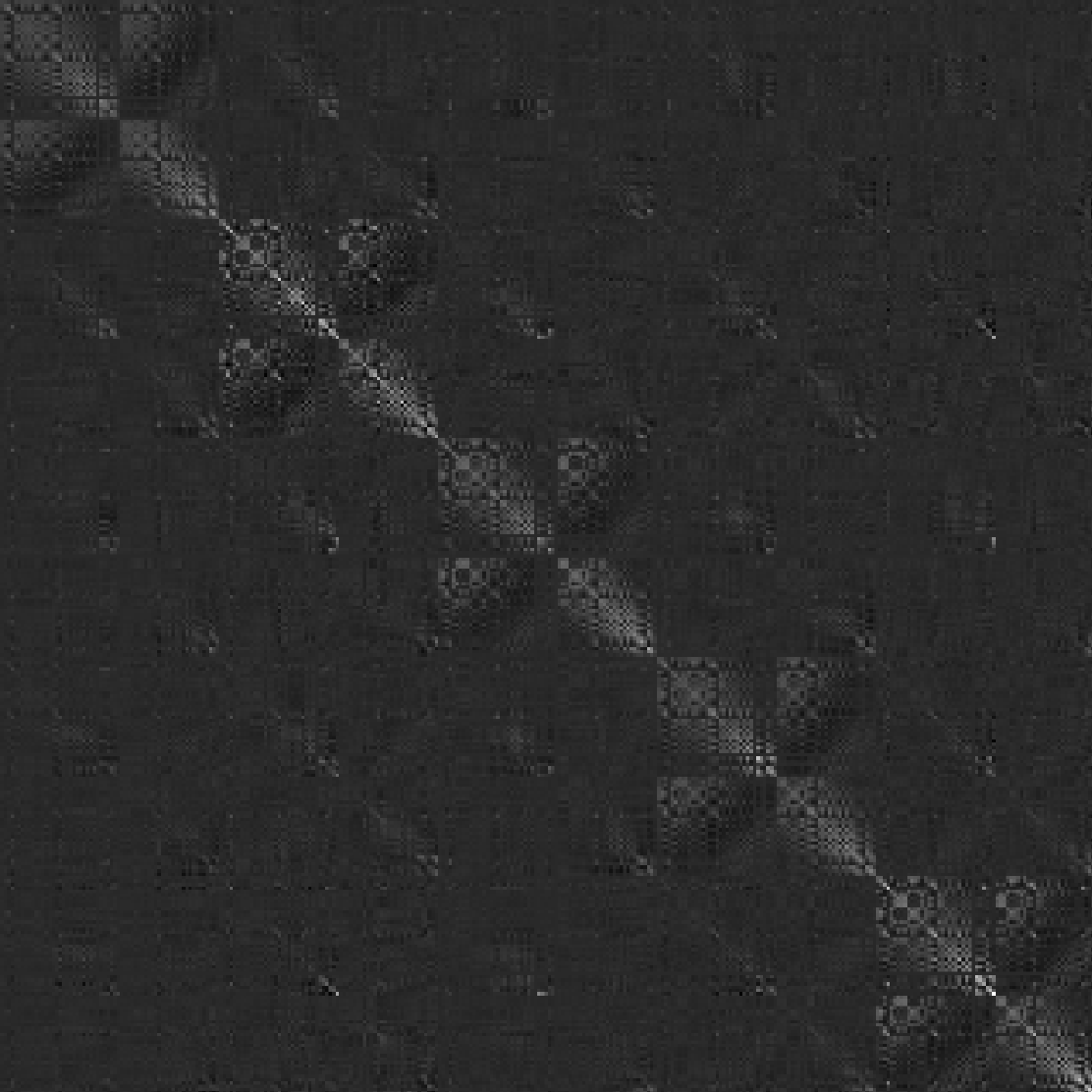}
\includegraphics[width=.45\columnwidth]{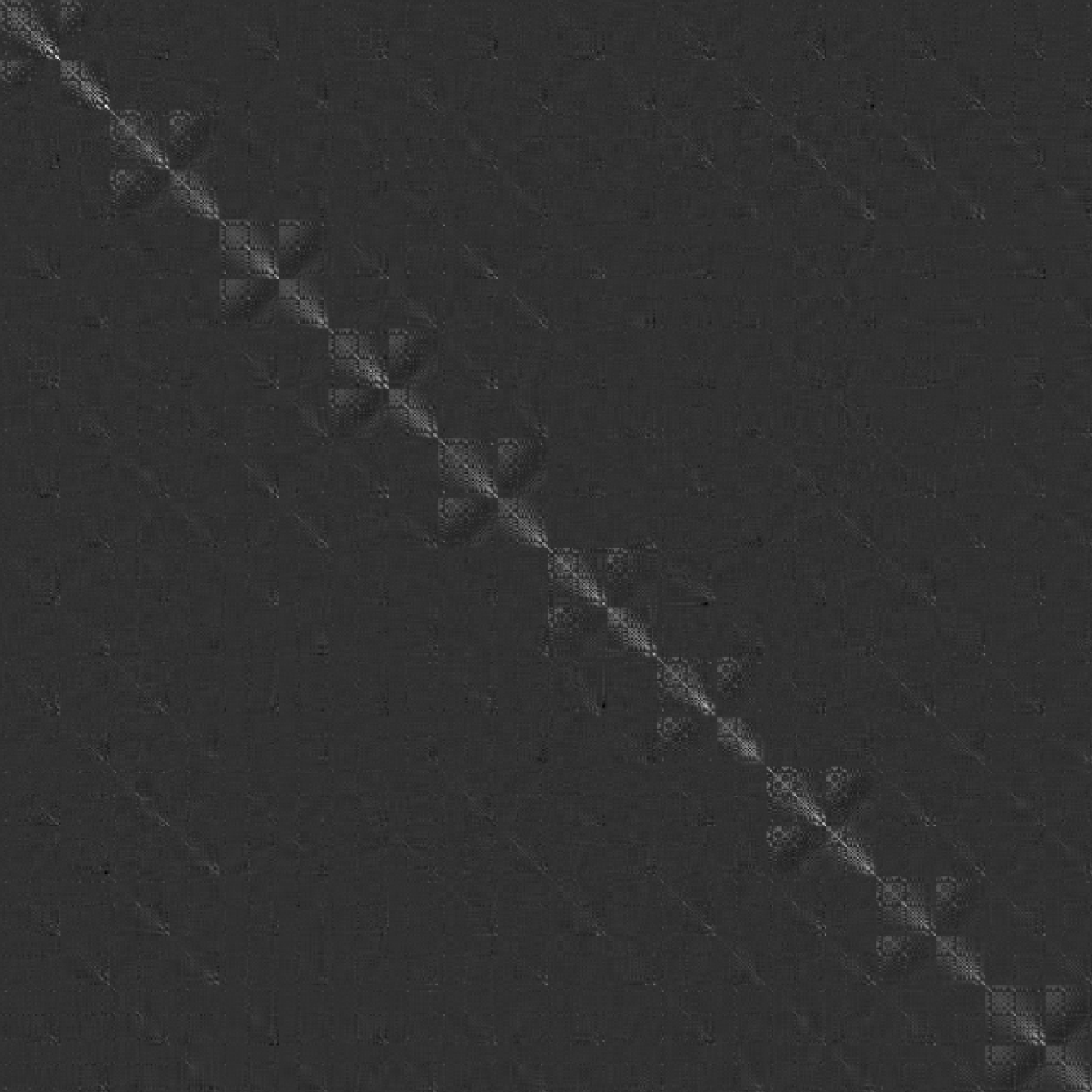}
\caption{ Affinity graph matrix $W$ with five and ten subjects.}
\label{10subject}
\end{center}
\vskip -0.2in
\end{figure} 
\section{Convergence analysis}
Since the objective function (\ref{original}) is nonconvex, it would not be easy to prove the convergence in theory. In this paper, we mathematically prove that our optimization algorithm has at least a convergent subsequence which converges to an accumulation point, and moreover, any accumulation point of our algorithm is a stationary point. Although the final solution might be a local optimum, our results are superior to the global optimal solution from convex approaches. Some previous work also reports similar observations \cite{xiang2013efficient,gong2012multi,zhang2012nonconvex}.  

We will show the proof in the case of $\|E\|_1$. Let's first reformulate our objective function:
\begin{equation}
\begin{split}
G(J, Z, E)&=F(J)+\lambda\|E\|_{1}\quad \\
&s.t.\quad Z=J,\quad X=XZ+E,   \label{second}
\end{split}
\end{equation}
\begin{equation}
\begin{split}
&L(J, Z, E, Y_1, Y_2, \mu)=G(J, Z, E)+\left\langle Y_1,X-XZ-E \right\rangle\\
&+\left\langle Y_2, J-Z\right\rangle 
+\frac{\mu}{2}(\|J-Z\|_{F}^2+\|X-XZ-E\|_{F}^2). \label{eq:AugLag}
\end{split}
\end{equation}
\begin{lemma}
\label{lemma11}
The sequences of $\{Y_{1}^t\}$ and $\{Y_{2}^t\}$ are bounded.
\end{lemma}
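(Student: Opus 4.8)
The plan is to read off the boundedness of each multiplier directly from the first-order optimality condition of the subproblem that produced the corresponding primal variable, and then to combine that condition with the multiplier update so that the penalty term cancels and the multiplier is pinned to an intrinsically bounded quantity. The guiding principle is that, after the update, $Y_1^{t+1}$ equals (a scalar multiple of) a subgradient of the $\ell_1$ term evaluated at $E^{t+1}$, while $Y_2^{t+1}$ equals the negative gradient of the spectral arctangent term evaluated at $J^{t+1}$; both of these are bounded uniformly in $t$.

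First I would treat $\{Y_1^t\}$. The $E$-subproblem is convex and solved exactly, so its subgradient optimality condition at $E^{t+1}$ is
\[
0\in\lambda\,\partial\|E^{t+1}\|_1-\mu^t(X-XZ^{t+1}-E^{t+1})-Y_1^t .
\]
Using (\ref{multi1}) to write $\mu^t(X-XZ^{t+1}-E^{t+1})=Y_1^{t+1}-Y_1^t$ collapses this to $Y_1^{t+1}\in\lambda\,\partial\|E^{t+1}\|_1$. Since every entry of a subgradient of $\|\cdot\|_1$ lies in $[-1,1]$, we get $\|Y_1^{t+1}\|_\infty\le\lambda$ for every $t$, which is the desired bound. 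The same conclusion can be read off directly from the closed form (\ref{error1}): writing $Q=X-XZ^{t+1}+Y_1^t/\mu^t$ one checks $Y_1^{t+1}=\mu^t(Q-E^{t+1})$ and that the soft-thresholding residual $Q-E^{t+1}$ has entries of magnitude at most $\lambda/\mu^t$.

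Next I would treat $\{Y_2^t\}$ in the same spirit. Writing $A=Z^{t+1}-\tfrac{1}{\mu^t}Y_2^t$, a one-line manipulation of (\ref{multi2}) gives $Y_2^{t+1}=\mu^t(J^{t+1}-A)$. By Theorem~\ref{firsthmm}, $J^{t+1}$ and $A$ share the singular vectors $U,V$, and by (\ref{optisigma}) their singular values satisfy $\sigma^*=(\sigma_A-\omega/\mu^t)_+$ with $\omega=\nabla f(\sigma^*)\in(0,1]^n$. Hence $0\le\sigma_{A,i}-\sigma^*_i\le\omega_i/\mu^t\le 1/\mu^t$ for every $i$, so each singular value of $Y_2^{t+1}=U\,diag(\mu^t(\sigma^*-\sigma_A))\,V^T$ has magnitude at most $1$, giving $\|Y_2^{t+1}\|_F\le\sqrt{n}$. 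Equivalently, the stationarity condition of (\ref{arct}) together with (\ref{multi2}) yields $Y_2^{t+1}=-\nabla F(J^{t+1})$, and since $F=f\circ\sigma$ is differentiable with $\nabla F(J^{t+1})=U\,diag\!\big(1/(1+(\sigma_i^{t+1})^2)\big)\,V^T$ whose singular values lie in $(0,1]$, the same bound follows.

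The main obstacle, and the only place requiring genuine care, is the $Y_2$ argument. First, the subproblem (\ref{arct}) is nonconvex and is only solved to a stationary point by the inner DC loop, so I must argue that this loop does return a point satisfying the stationarity condition, or, more safely, use the explicit converged formula (\ref{optisigma}), which already encodes that condition and disposes of the nonnegativity constraint $\sigma\ge0$ cleanly. Second, the identity $Y_2^{t+1}=-\nabla F(J^{t+1})$ relies on the differentiation rule for unitarily invariant spectral functions, namely that $F=f\circ\sigma$ is differentiable at $J^{t+1}$ with gradient $U\,diag(\nabla f(\sigma^{t+1}))\,V^T$; this is exactly where the stated properties of $f$ (absolutely symmetric, differentiable, with $\nabla f(0)$ defined as $1$) are invoked. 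Given either route, the elementary bound $0<1/(1+s^2)\le1$ for $s\ge0$ finishes the estimate, and the $Y_1$ part is entirely routine.
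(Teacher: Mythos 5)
Your proof is correct and follows essentially the same route as the paper: in both cases the multiplier update is substituted into the first-order optimality condition of the corresponding subproblem, so that $Y_1^{t+1}$ lands in $\lambda\,\partial\|E^{t+1}\|_1$ and $Y_2^{t+1}$ equals $-\nabla F(J^{t+1})=-U\,diag\bigl(1/(1+\sigma_i^2)\bigr)V^T$, both of which are uniformly bounded. Your secondary derivations from the explicit shrinkage formulas are a nice quantitative confirmation (e.g.\ $\|Y_2^{t+1}\|_F\le\sqrt{n}$) but do not change the underlying argument.
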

\begin{proof}
$J^{t+1}$ satisfies the first-order necessary local optimality condition,
\begin{equation}
\begin{split}
\label{bound1}
&\partial_J L\left(J, Z^{t+1}, E^t, Y_1^t, Y_2^t, \mu^{t}\right)|_{J^{t+1}}\\
=&\partial_J F\left(J\right)|_{J^{t+1}}+\mu^{t}\left(J^{t+1}-Z^{t+1}+\frac{Y_2^t}{\mu^t}\right)\\
=&\partial_J F\left(J\right)|_{J^{t+1}}+Y_2^{t+1}\\
=&0.
\end{split}
\end{equation}
Let's define $\theta_i=\frac{1}{1+(\sigma_i)^2}$ if $\sigma_i \neq 0$; otherwise, it is 1.
According to (\ref{takederi}) in Appendix B, 
\begin{equation}
\begin{split}
Udiag(\theta)V^T=\partial_J F\left(J\right)|_{J^{t+1}}
,
\end{split}
\end{equation}
and $0<\frac{1}{1+(\sigma_i^{t+1})^2}\leq 1$, $\partial_J F\left(J\right)|_{J^{t+1}}$ is bounded. From (\ref{bound1}), we conclude that $Y_2^{t+1}$ is bounded.

Similarly, for $E^{t+1}$ 
\begin{equation}
\begin{split}
&0 \in \partial_E L\left(J^{t+1}, Z^{t+1}, E, Y_1^t, Y_2^t, \mu^{t}\right)|_{E^{t+1}}\\
&=\partial_E \lambda(\|E\|_1)|_{E^{t+1}}-Y_1^t\\
&-\mu^{t}\left(X-XZ^{t+1}-E^{t+1}\right)\\
&=\partial_E \lambda(\|E\|_1)|_{E^{t+1}}+Y_1^{t+1}.
\end{split}
\label{derive}
\end{equation}
Here $\partial_E$ denotes the subgradient operator \cite{clarke1990optimization}. Because $||E||_1$ is nonsmooth only at $E_{ij}=0$, we define $\left[\partial_E\|E\|_1\right]_{ij}=0$ if $E_{ij}=0$. Then $0\leq \| \partial_E \|E\|_1 \|_F^2\leq mn $ is bounded. Therefore, $\{Y_1^{t+1}\}$ is bounded. 
\end{proof}
\begin{lemma}
\label{lemma12}
$\{J^t\}$, $\{E^t\}$ and $\{Z^t\}$ are bounded if $\sum \frac{\mu^{t+1}}{(\mu^t)^2}<\infty$, $\sum \frac{1}{\mu^t}<\infty$ and $X^T X$ is invertible.
\end{lemma}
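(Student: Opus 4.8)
The plan is to treat the augmented Lagrangian value along the iterates as an energy function: first show it stays bounded above, then read off bounds on the primal blocks, relying throughout on Lemma~\ref{lemma11} (boundedness of $\{Y_1^t\}$ and $\{Y_2^t\}$). The starting point is the descent property built into the block structure of Algorithm~1: since $Z^{t+1}$, $J^{t+1}$, $E^{t+1}$ are each produced by (approximately) minimizing $L$ in one block with the remaining variables, the multipliers, and $\mu^t$ held fixed (for the $J$-block this uses that the inner DC iteration is monotone), chaining the three one-block decreases gives $L(J^{t+1},Z^{t+1},E^{t+1},Y_1^t,Y_2^t,\mu^t)\le L(J^t,Z^t,E^t,Y_1^t,Y_2^t,\mu^t)$.

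Next I would compare the Lagrangian across an update of the multipliers and penalty. Writing $L_t := L(J^t,Z^t,E^t,Y_1^{t-1},Y_2^{t-1},\mu^{t-1})$ and substituting the update rules (\ref{multi1})--(\ref{multi2}) in the form $X-XZ^t-E^t=(Y_1^t-Y_1^{t-1})/\mu^{t-1}$ and $J^t-Z^t=(Y_2^t-Y_2^{t-1})/\mu^{t-1}$, the difference $L(J^t,Z^t,E^t,Y_1^t,Y_2^t,\mu^t)-L_t$ collapses into terms $\|Y_i^t-Y_i^{t-1}\|_F^2/\mu^{t-1}$ from the inner products and $(\mu^t-\mu^{t-1})\|Y_i^t-Y_i^{t-1}\|_F^2/(2(\mu^{t-1})^2)$ from the penalty increment. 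Lemma~\ref{lemma11} bounds every $\|Y_i^t-Y_i^{t-1}\|_F^2$ by a common constant $C$, so combining with the descent inequality yields $L_{t+1}\le L_t+2C/\mu^{t-1}+C\mu^t/(\mu^{t-1})^2$. The two hypotheses $\sum 1/\mu^t<\infty$ and $\sum \mu^{t+1}/(\mu^t)^2<\infty$ make the correction summable, so $\{L_t\}$ is bounded above by some constant $M$.

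The third step converts this energy bound into bounds on the iterates by completing the square in $L_t$:
\begin{align*}
L_t = {} & F(J^t) + \lambda\|E^t\|_1 + \frac{\mu^{t-1}}{2}\Big\|X-XZ^t-E^t+\tfrac{Y_1^{t-1}}{\mu^{t-1}}\Big\|_F^2 \\
& + \frac{\mu^{t-1}}{2}\Big\|J^t-Z^t+\tfrac{Y_2^{t-1}}{\mu^{t-1}}\Big\|_F^2 - \frac{\|Y_1^{t-1}\|_F^2 + \|Y_2^{t-1}\|_F^2}{2\mu^{t-1}}.
\end{align*}
Since $F(J^t)=\sum_i\arctan(\sigma_i(J^t))\ge 0$ and both squared-norm terms are nonnegative, while the subtracted term is bounded (the $Y_i$ are bounded and $\mu^{t-1}\ge\mu^0$), the bound $L_t\le M$ forces $\lambda\|E^t\|_1\le M'$ for a constant $M'$; hence $\{E^t\}$ is bounded.

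Finally I would recover $Z^t$ and $J^t$. From the $Y_1$-update, $XZ^t=X-E^t-(Y_1^t-Y_1^{t-1})/\mu^{t-1}$, whose right-hand side is bounded since $E^t$ and the multipliers are bounded and $\mu^{t-1}\ge\mu^0$; thus $\{XZ^t\}$ is bounded. Here the hypothesis that $X^TX$ is invertible is essential: $\|XZ^t\|_F^2=\mathrm{Tr}((Z^t)^TX^TXZ^t)\ge\lambda_{\min}(X^TX)\|Z^t\|_F^2$ with $\lambda_{\min}(X^TX)>0$, which upgrades the bound on $\{XZ^t\}$ to a bound on $\{Z^t\}$; then $J^t=Z^t+(Y_2^t-Y_2^{t-1})/\mu^{t-1}$ is bounded as well. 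The main obstacle I anticipate is the second step, namely matching the multiplier/penalty-increment corrections to the two summability hypotheses so that the telescoped sum remains finite; once $\{L_t\}$ is bounded, the completion of the square and the single use of $X^TX\succ 0$ are comparatively routine.
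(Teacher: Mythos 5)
Your proposal is correct and follows essentially the same route as the paper's own proof: the one-block descent chain, the telescoped multiplier/penalty correction of size $\frac{\mu^t+\mu^{t-1}}{2(\mu^{t-1})^2}\|Y_i^t-Y_i^{t-1}\|_F^2$ controlled by the two summability hypotheses, and the completion of the square to read off boundedness of $E^t$, $XZ^t$ (hence $Z^t$ via invertibility of $X^TX$), and $J^t$. The only differences are cosmetic (you recover $XZ^t$ and $J^t$ from the multiplier-update identities rather than from the nonnegative squared terms, and you explicitly flag the monotonicity needed from the inner DC iteration, which the paper glosses over).
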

\begin{proof}
\begin{equation}
\begin{split}
& L\left(J^t, Z^{t},E^{t},Y_1^{t}, Y_2^t, \mu^{t}\right)\\
&=L\left(J^t, Z^{t},E^{t},Y_1^{t-1}, Y_2^{t-1}, \mu^{t-1}\right)+\\
&\frac{\mu^t-\mu^{t-1}}{2}(\|J^t-Z^t\|_F^2+\|X-XZ^t-E^t\|_F^2)\\
&+Tr((Y_1^t-Y_1^{t-1})(X-XZ^t-E^t))\\
&+Tr((Y_2^t-Y_2^{t-1})(J^t-Z^t))\\
&= L\left(J^t, Z^{t},E^{t},Y_1^{t-1}, Y_2^{t-1}, \mu^{t-1}\right)+\\
&\frac{\mu^t+\mu^{t-1}}{2(\mu^{t-1})^2}(\|Y_1^t-Y_1^{t-1}\|_F^2+\|Y_2^t-Y_2^{t-1}\|_F^2).
\end{split}
\end{equation}
\begin{equation}
\begin{split}
& L\left(J^{t+1}, Z^{t+1},E^{t+1},Y_1^{t}, Y_2^t, \mu^{t}\right)\\
&\leq L(J^{t+1}, Z^{t+1}, E^{t}, Y_1^{t}, Y_2^{t}, \mu^t)\\
&\leq L(J^{t}, Z^{t+1}, E^{t}, Y_1^{t}, Y_2^{t}, \mu^t)\\
&\leq L\left(J^{t}, Z^{t},E^{t},Y_1^{t}, Y_2^t, \mu^{t}\right)\\
&\leq  L\left(J^{t}, Z^{t},E^{t},Y_1^{t-1}, Y_2^{t-1}\mu^{t-1}\right)+\\
&\frac{\mu^t+\mu^{t-1}}{2(\mu^{t-1})^2}(\|Y_1^t-Y_1^{t-1}\|_F^2+\|Y_2^t-Y_2^{t-1}\|_F^2). \label{lasteq}
\end{split}
\end{equation}
Iterating the inequality (\ref{lasteq}) gives that
\begin{equation}
\begin{split}
&L(J^{t+1}, Z^{t+1}, E^{t+1}, Y_1^{t}, Y_2^{t}, \mu^t)  \\
&\leq L\left(J^1, Z^{1},E^{1},Y_1^{0}, Y_2^{0}, \mu^{0}\right)+\\
&\sum_{i=1}^t\frac{\mu^i+\mu^{i-1}}{2(\mu^{i-1})^2}(\|Y_1^i-Y_1^{i-1}\|_F^2\\
&+\|Y_2^i-Y_2^{i-1}\|_F^2).
\end{split}
\end{equation}
Under the given conditions on $\{ \mu^t \}$, both terms on the right-hand side of the above inequality  are bounded, thus $L\left(J^{t+1}, Z^{t+1},E^{t+1},Y_1^{t}, Y_2^t, \mu^{t}\right)$ is bounded.
In addition, 
\begin{equation}
\begin{split}
\label{boundseq}
& L\left(J^{t+1}, Z^{t+1},E^{t+1},Y_1^{t}, Y_2^t, \mu^{t}\right)+\\
&\frac{1}{2\mu^t}(\|Y_1^t\|_F^2+\|Y_2^t\|_F^2)\\
&=F(J^{t+1})+\lambda \|E^{t+1}\|_1+\\
&\frac{\mu^t}{2}\|J^{t+1}-Z^{t+1}+\frac{Y_2^t}{\mu^t}\|_F^2+\\
&\frac{\mu^t}{2}\|X-E^{t+1}-XZ^{t+1}+\frac{Y_1^t}{\mu^t}\|_F^2.
\end{split}
\end{equation}
The left-hand side in the above equation is bounded and each term on the right-hand side is nonnegative, so each term is bounded. Therefore, $E^{t+1}$ is bounded. $XZ^{t+1}$ is bounded according to the last term on the right-hand side of (\ref{boundseq}), and thus after multiplying a constant matrix $X^T$, we have $X^T X Z^{t+1}$ is bounded. Under the condition that $X^T X$ is invertible, by multiplying a constant matrix $(X^T X)^{-1}$, we have that $(X^T X)^{-1} X^T X Z^{t+1} = Z^{t+1}$ is bounded. Finally, $J^{t+1}$ is bounded because the second to the last term is bounded. Therefore, $\{J^{t}\}$, $\{E^t\}$ and $\{Z^{t}\}$ are bounded. 
\end{proof}
\begin{theorem}
The sequence $\{J^t, E^t, Z^t, Y_1^t, Y_2^t\}$ generated by Algorithm 1 has at least one accumulation point, under the conditions that 
$\sum \frac{\mu^{t+1}}{ ( \mu^t)^2 } < \infty$, $\sum \frac{1} {  \mu^t } < \infty$ and $X^TX$ is invertible. For any accumulation point  $\{J^*, E^*, Z^*, Y_1^*, Y_2^*\}$, $\{ J^*, E^*, Z^* \}$ is a stationary point of optimization problem (\ref{second}), under the conditions that 
$\mu^t ( J^{t+1} - J^{t}) \rightarrow 0$, and  $\mu^t ( E^{t+1} - E^{t}) \rightarrow 0$. 
\end{theorem}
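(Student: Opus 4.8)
The plan is to split the argument into two parts mirroring the statement: first producing an accumulation point, then verifying the KKT system of (\ref{second}) at its primal part $\{J^*, E^*, Z^*\}$.

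For existence, I would simply invoke Lemmas \ref{lemma11} and \ref{lemma12}: under the stated hypotheses they guarantee that all five sequences $\{J^t\}$, $\{E^t\}$, $\{Z^t\}$, $\{Y_1^t\}$, $\{Y_2^t\}$ are bounded. Since each lives in a finite-dimensional Euclidean matrix space, the joint sequence is bounded, so Bolzano--Weierstrass yields a subsequence $\{t_k\}$ along which $(J^{t_k}, E^{t_k}, Z^{t_k}, Y_1^{t_k}, Y_2^{t_k}) \to (J^*, E^*, Z^*, Y_1^*, Y_2^*)$. I would also record two consequences of $\sum 1/\mu^t<\infty$: first $\mu^t\to\infty$, and hence from the multiplier updates (\ref{multi1})--(\ref{multi2}) that the feasibility residuals $J^{t+1}-Z^{t+1}=(Y_2^{t+1}-Y_2^t)/\mu^t$ and $X-XZ^{t+1}-E^{t+1}=(Y_1^{t+1}-Y_1^t)/\mu^t$ both tend to $0$ (bounded multipliers over a diverging $\mu^t$), so the limit satisfies $J^*=Z^*$ and $X=XZ^*+E^*$. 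Second, since $\mu^t\to\infty$, the extra hypotheses $\mu^t(J^{t+1}-J^t)\to0$ and $\mu^t(E^{t+1}-E^t)\to0$ force $J^{t+1}-J^t\to0$ and $E^{t+1}-E^t\to0$, so the shifted primal iterates $J^{t_k+1}, E^{t_k+1}$ (and, via the vanishing residual, $Z^{t_k+1}$) share the limits $J^*, E^*, Z^*$.

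For stationarity I would collect the first-order optimality condition of each subproblem at step $t+1$. The $J$-update gives, via (\ref{bound1}) and the spectral gradient formula (\ref{takederi}), $\nabla F(J^{t+1})+Y_2^{t+1}=0$; the $E$-update gives, via (\ref{derive}), $Y_1^{t+1}\in\lambda\,\partial\|E^{t+1}\|_1$; and differentiating the strongly convex $Z$-subproblem (\ref{upZ}) and re-expressing through (\ref{multi1})--(\ref{multi2}) yields $X^TY_1^{t+1}+Y_2^{t+1}=\mu^t(J^{t+1}-J^t)-X^T\mu^t(E^{t+1}-E^t)$. Because $\{Y_1^{t+1}\}$ and $\{Y_2^{t+1}\}$ are bounded, I would pass to a further subsequence of $\{t_k\}$ along which $Y_1^{t_k+1}\to\bar Y_1$ and $Y_2^{t_k+1}\to\bar Y_2$. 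Letting $k\to\infty$: continuity of the matrix map $J\mapsto\nabla F(J)$ (a consequence of $f$ being $C^1$ on $[0,\infty)$, with $\theta_i=1/(1+\sigma_i^2)$ continuous through zero and coincident singular values) gives $\nabla F(J^*)+\bar Y_2=0$; outer semicontinuity (closed graph) of $\partial\|\cdot\|_1$ together with $E^{t_k+1}\to E^*$ gives $\bar Y_1\in\lambda\,\partial\|E^*\|_1$; and the two $\mu^t$-difference conditions annihilate the right-hand side of the $Z$-relation, giving $X^T\bar Y_1+\bar Y_2=0$. These three relations plus the feasibility equalities are exactly the KKT system of (\ref{second}), so $\{J^*,E^*,Z^*\}$ is a stationary point, with $\bar Y_1,\bar Y_2$ as certifying multipliers.

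The hard part is the handling of the multipliers. The subtlety is that the limits $Y_1^*, Y_2^*$ appearing in the accumulation point (limits of $Y_i^{t_k}$) need not coincide with the limits $\bar Y_1,\bar Y_2$ of the \emph{shifted} sequences $Y_i^{t_k+1}$, since each difference $Y_i^{t_k+1}-Y_i^{t_k}=\mu^{t_k}\times(\text{residual})$ is only known to be bounded, not vanishing. I would therefore resist identifying them and instead exploit that stationarity of the primal triple $\{J^*,E^*,Z^*\}$ requires only the \emph{existence} of KKT multipliers, which the shifted limits $\bar Y_1,\bar Y_2$ supply. The remaining delicate points — continuity of $\nabla F$ through the SVD and the closedness of the $\ell_1$ subdifferential graph — are routine once stated, and the conditions $\mu^t(J^{t+1}-J^t)\to0$, $\mu^t(E^{t+1}-E^t)\to0$ serve the single purpose of clearing the error terms in the $Z$-stationarity limit.
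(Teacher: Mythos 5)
Your proof follows essentially the same route as the paper's: boundedness via Lemmas \ref{lemma11} and \ref{lemma12}, Bolzano--Weierstrass for the accumulation point, vanishing feasibility residuals $J^*=Z^*$ and $X=XZ^*+E^*$ from the multiplier updates with $\mu^t\to\infty$, and passage to the limit in the three per-block first-order conditions, with the hypotheses $\mu^t(J^{t+1}-J^t)\to 0$ and $\mu^t(E^{t+1}-E^t)\to 0$ clearing the error terms in the $Z$-stationarity relation exactly as the paper does. The one place you go beyond the paper is worth keeping: the paper assumes without loss of generality that the full sequence converges and silently identifies the limits of $Y_i^{t_k}$ with those of the shifted $Y_i^{t_k+1}$ (whose differences are only bounded, not vanishing), whereas you correctly note these limits need not coincide and that stationarity of $\{J^*,E^*,Z^*\}$ only requires the \emph{existence} of certifying multipliers, which your $\bar Y_1,\bar Y_2$ supply --- a small but genuine tightening of the argument.
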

\begin{proof}
Based on the conditions on the penalty parameter sequence $\{\mu^t\}$ and $X^TX$, Algorithm 1 generates a  bounded sequence $\{J^t, E^t, Z^t, Y_1^t, Y_2^t\}$ by Lemma (\ref{lemma11}) and (\ref{lemma12}) . By the Bolzano-Weierstrass theorem, at least one accumulation point exists, e.g., $\{J^*, E^*, Z^*, Y_1^*, Y_2^*\}$. Without loss of generality, we assume that 
$\{  J^t, E^t, Z^t, Y_1^t, Y_2^t\}$ itself converges to 
$\{ J^*, E^*, Z^*, Y_1^*, Y_2^* \}$. As shown below, $\{ J^*, E^*, Z^*\}$ is a stationary point of problem (\ref{second}), under additional conditions that 
$\mu_t (E^{t+1} - E^{t}) \rightarrow 0$ and $\mu_t (J^{t+1} - J^{t}) \rightarrow 0$.

Since $J^t-Z^t=\frac{Y_2^t-Y_2^{t-1}}{\mu^{t-1}}$ and $\mu^t\rightarrow \infty$, we have $J^t-Z^t \rightarrow 0$. Therefore, $J^*=Z^*$. 

Similarly, by $X-XZ^t-E^t=\frac{Y_1^t-Y_1^{t-1}}{\mu^{t-1}}$,
we have $E^*=X-XZ^*$. 

For $Z^{t+1}$, the first-order optimality condition is 
\begin{equation*}
\begin{split}
&\nabla_Z L\left(J^{t}, Z, E^{t}, Y_1^t, Y_2^t, \mu^{t}\right)|_{Z^{t+1}}\\
&=-Y_2^t-X^TY_1^t-\mu^{t}\left(J^{t}-Z^{t+1}\right)\\
&-\mu^{t}X^T\left(X-E^{t}-XZ^{t+1}\right)\\
&=-Y_2^t-X^TY_1^t-\mu^{t}\left(J^{t+1}-Z^{t+1}+J^t-J^{t+1}\right)\\
&-\mu^{t}X^T\left(X-E^{t+1}-XZ^{t+1}-E^{t}+E^{t+1}\right)\\
&=-Y_2^{t+1}+\mu^{t}\left(J^{t+1}-J^t\right)-X^TY_1^{t+1}\\
&-\mu^{t}X^T\left(E^{t+1}-E^t\right)\\
&=0.
\end{split}
\label{derive}
\end{equation*}
If $\mu^{t}(J^{t+1}-J^{t})\rightarrow 0$ and $\mu^{t}(E^{t+1}-E^{t})\rightarrow 0$, we have
$X^TY_2^t+Y_1^t\rightarrow 0$, i.e., $-X^TY_2^*=Y_1^*$. It is easy to verify $\partial_E L\left(J, Z, E, Y_1, Y_2, \mu\right)|_{E^*}=0$. 
Therefore, $\{J^*, E^*, Z^*, Y_1^*, Y_2^*\}$ satisfies the KKT conditions of $L(J, E, Z, Y_1, Y_2 )$ and thus $\{  J^*, E^*, Z^*  \}$  is a stationary point of (\ref{second}).
\end{proof}
\section{Experiments} 
This section presents experiments with the proposed algorithm on the Extended Yale B (EYaleB) \cite{lee2005acquiring}  and Hopkins 155 databases \cite{tron2007benchmark}. They are standard tests for robust subspace clustering algorithms. As shown in \cite{elhamifar2013sparse}, the challenge in the Hopkins 155 dataset is due to the small principal angles between subspaces. For EYaleB, the challenge lies in the small principal angles and another factor that data points from different subspaces are close. Our results are compared with several state-of-the-art subspace clustering algorithms, including LRR \cite{liu2013robust}, SSC \cite{elhamifar2013sparse}, LRSC \cite{favaro2011closed,vidal2014low}, spectral curvature clustering (SCC) \cite{chen2009spectral}, and local subspace affinity (LSA) \cite{yan2006general}, in terms of misclassification rate\footnote{The implementation of our algorithm is available at: https://github.com/sckangz/arctangent.}. For fair comparison, we follow the experimental setup in \cite{elhamifar2013sparse} and obtain the results. 

As other methods do, we tune our parameters to obtain the best results. In general, the value of $\lambda$ depends on prior knowledge of the noise level of the data. If the noise is heavy, a small $\lambda$ should be adopted. $\mu^0$ and $\rho$ affect the convergence speed. The larger their values are, the fewer iterations are required for the algorithm to 
converge, but meanwhile we may lose some precision of the final objective function value. In the literature, the value of $\rho$ is often chosen between 1 and 1.1. The iteration stops at a relative normed difference of $10^{-5}$ between two successive iterations, or a maximum of 150 iterations.   

\subsection{Face Clustering}

Face clustering refers to partitioning a set of face images from multiple individuals to multiple subspaces according to the identity of each individual. The face images are heavily contaminated by sparse gross errors due to varying lighting conditions, as shown in Figure~\ref{examples}. Therefore, $\left\|E\right\|_1$ is used to model the errors in our experiment. The EYaleB database contains cropped face images of 38 individuals taken under 64 different illumination conditions. 
The 38 subjects were divided into four groups as follows: subjects 1 to 10, 11
to 20, 21 to 30, and 31 to 38. All choices of $n \in \{2, 3, 5, 8, 10\}$ are considered for
each of the first three groups, and all choices of $n \in \left\{2, 3, 5, 8\right\}$ are considered for the last group. As a result, there are $\{163, 416, 812, 136, 3\}$ combinations corresponding to different $n$. Each image is downsampled to $48\times 42$ and is vectorized to a 2016-dimensional vector. $\lambda=10^{-5}$, $\mu^0=1.7$ and $\gamma=1.03$ are used in this experiment. 
\begin{figure}[ht]
\vskip 0.2in
\begin{center}
\begin{tabular}{r c l }
\includegraphics[width=.28\columnwidth]{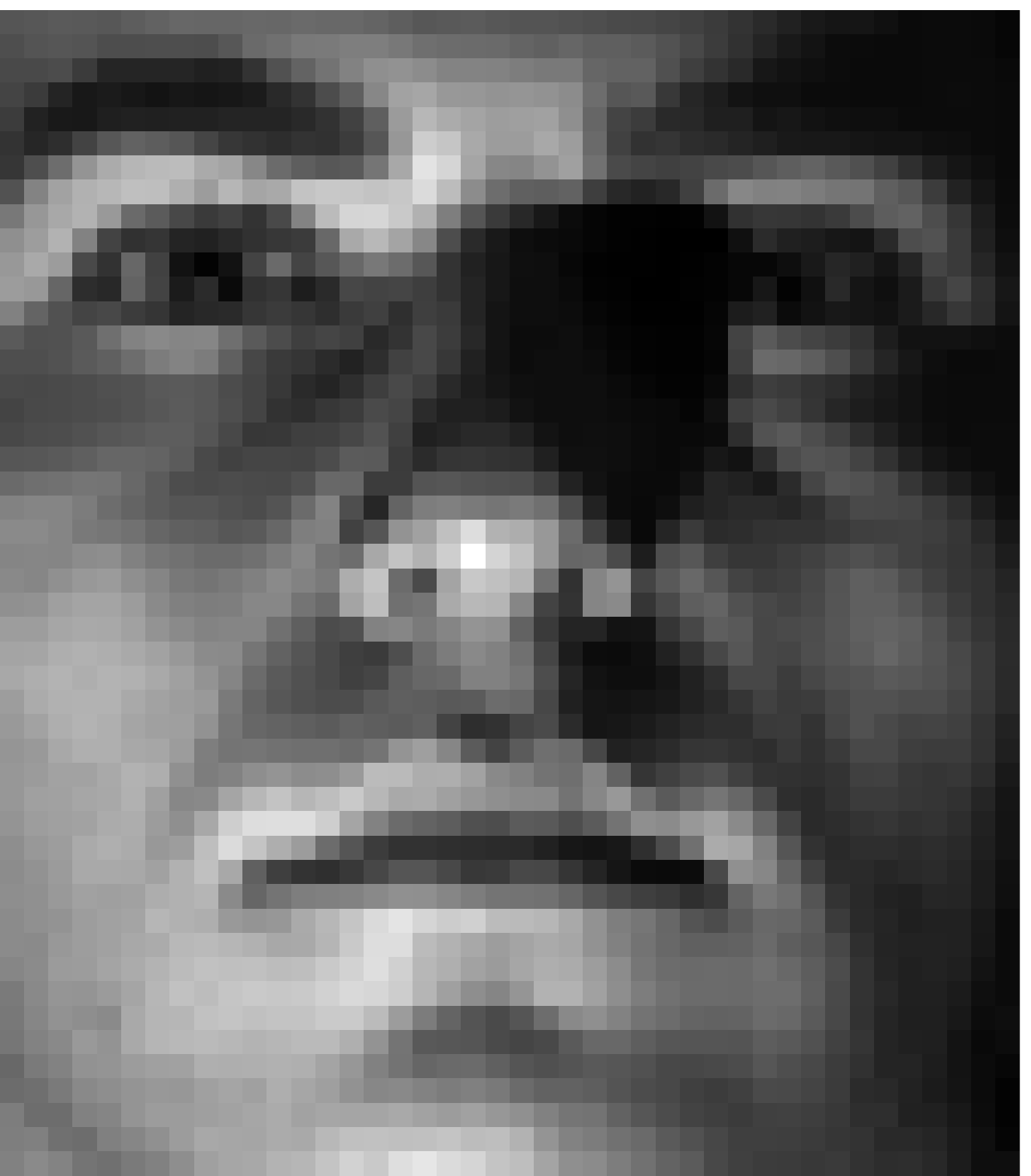}&
\includegraphics[width=.28\columnwidth]{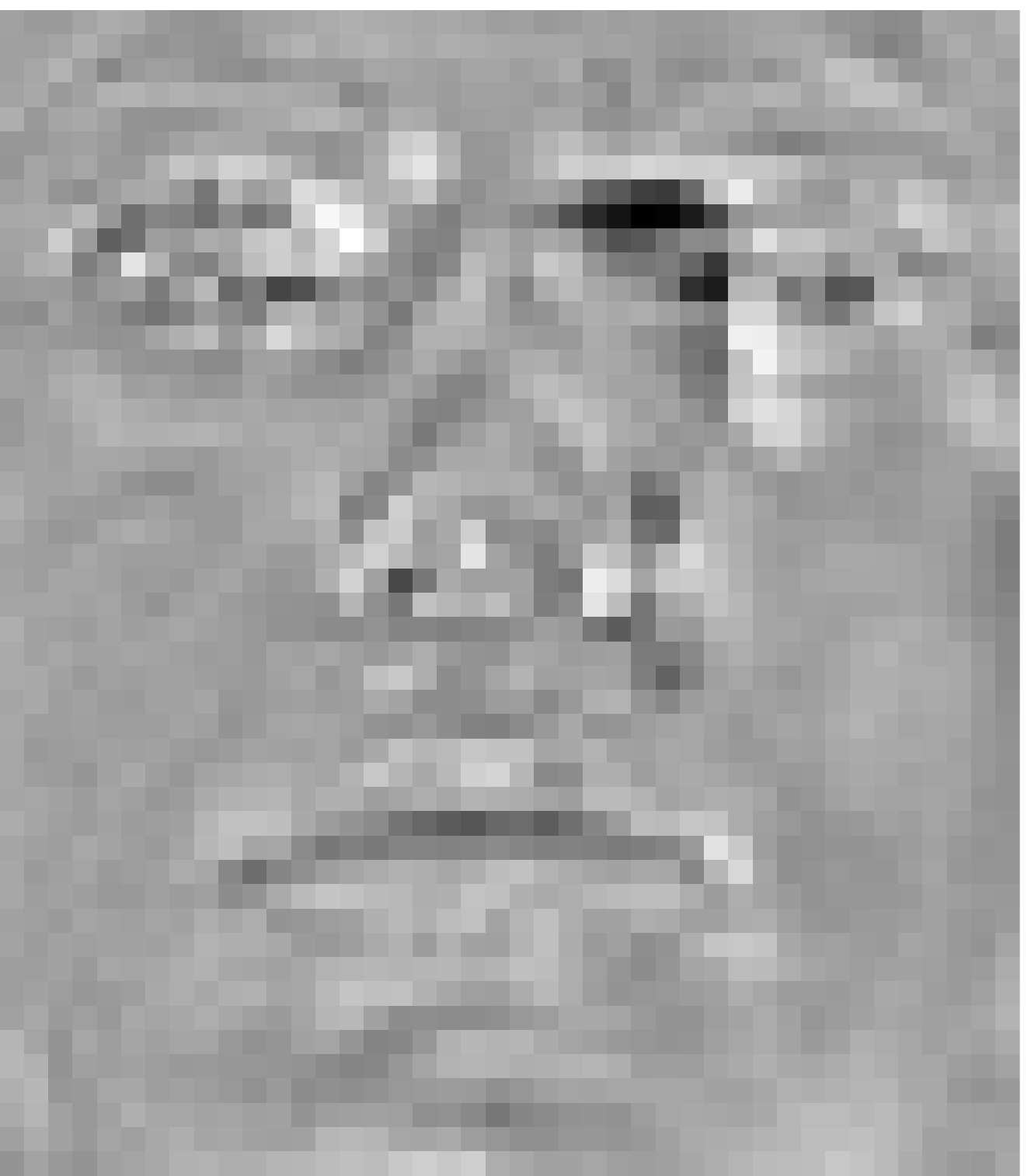}&
\includegraphics[width=.28\columnwidth]{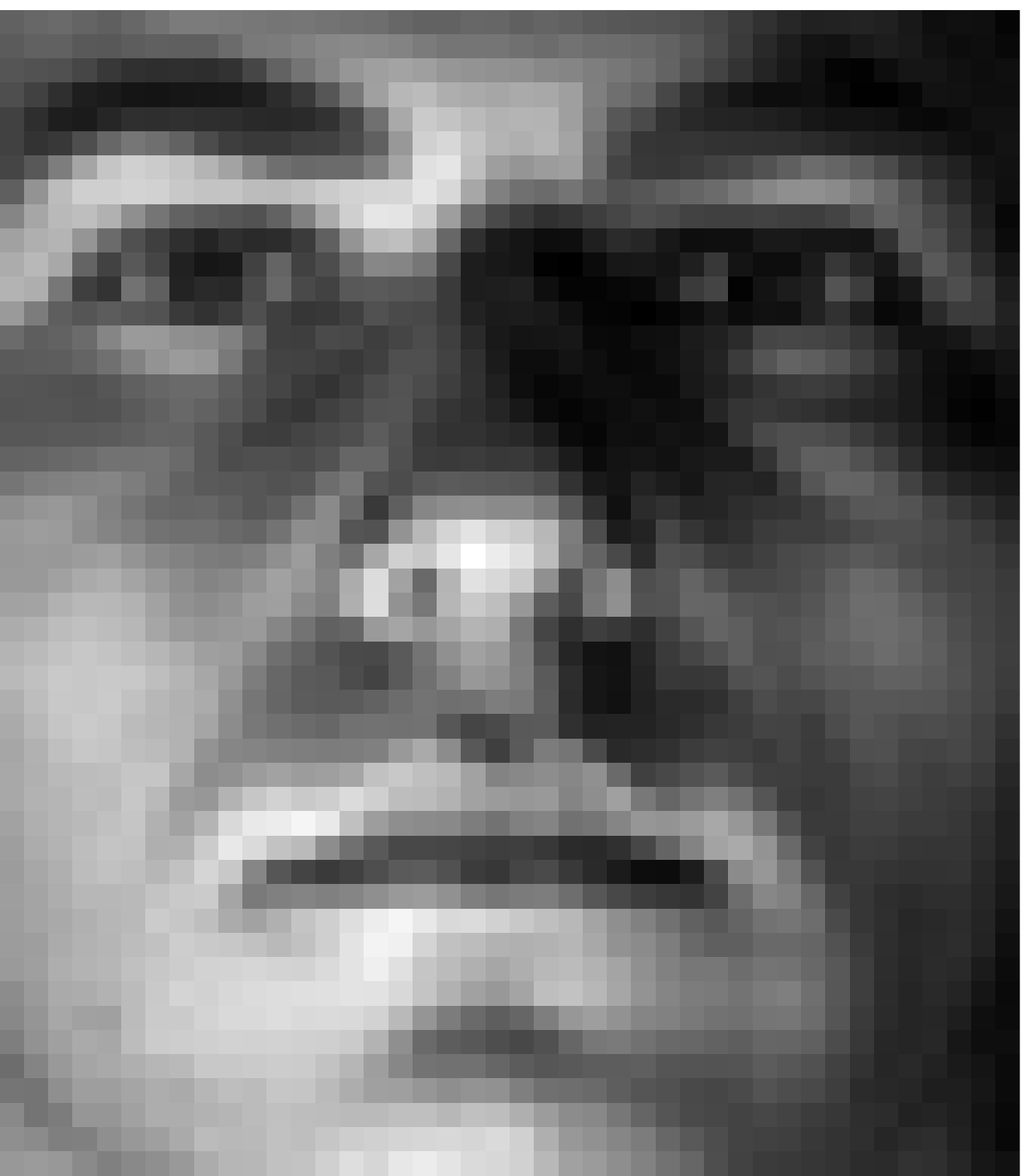}\\
\includegraphics[width=.28\columnwidth]{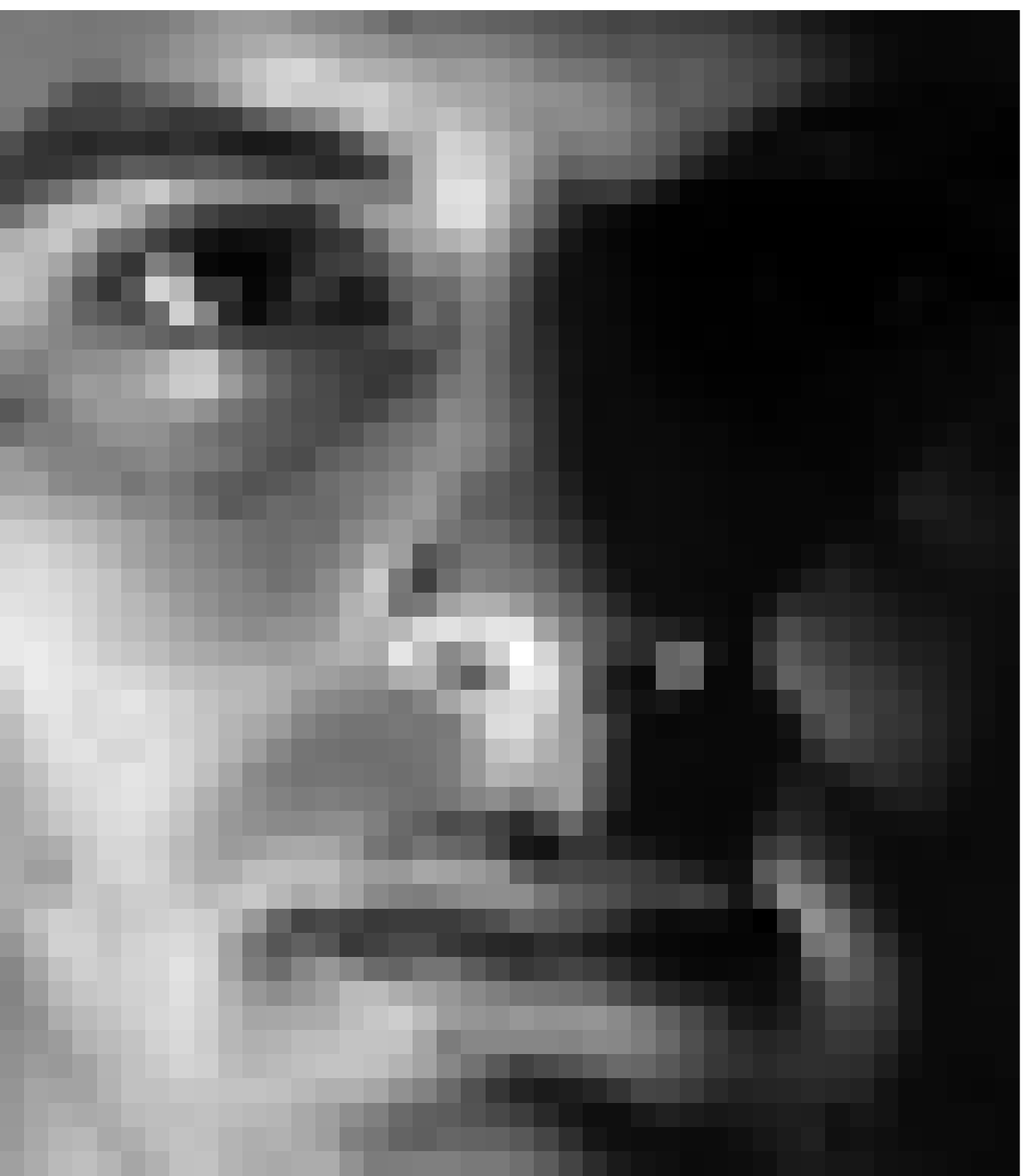}&
\includegraphics[width=.28\columnwidth]{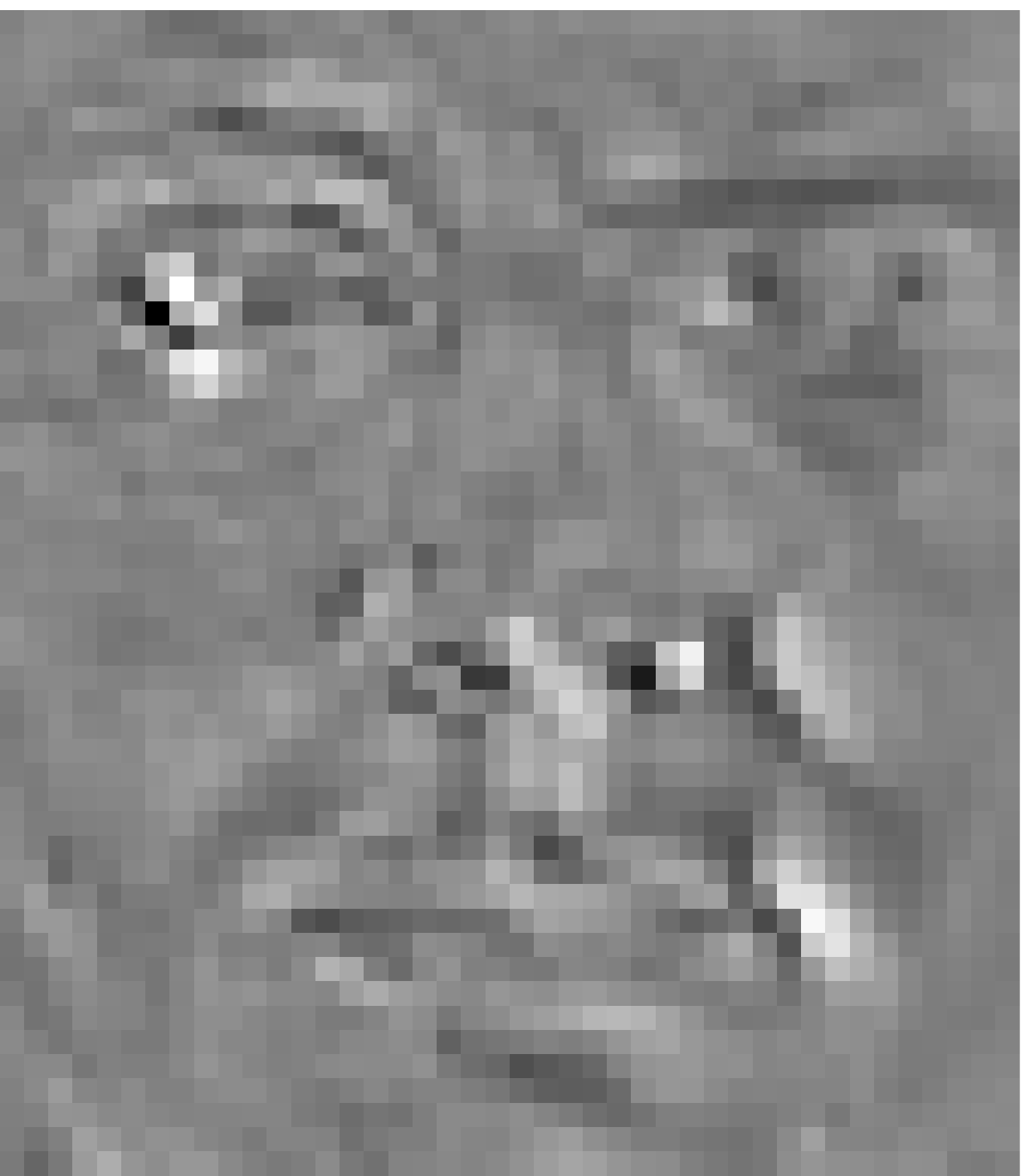}&
\includegraphics[width=.28\columnwidth]{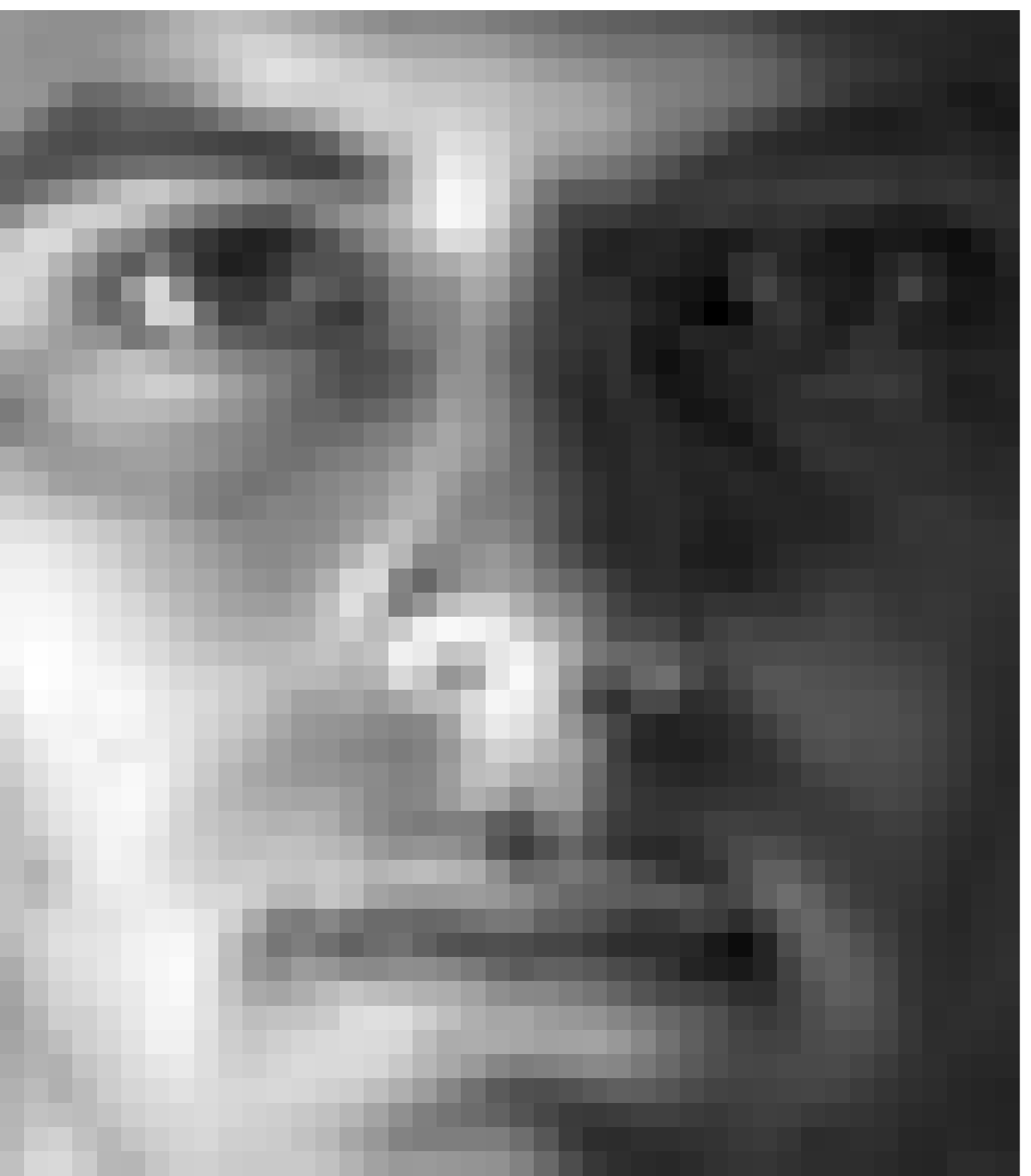}
\end{tabular}
\caption{ Recovery results of two face images. The three columns from left to right are the original image ($X$), the error matrix ($E$) and the recovered image ($XZ$), respectively.}
\label{faceerror}
\end{center}
\vskip -0.2in
\end{figure} 

\begin{table}[ht]
\caption{Clustering error rates (\%) on the EYaleB database.}
\vskip 0.15in
\begin{center}
\begin{small}
\begin{sc}
\resizebox{.45\textwidth}{!}{
\begin{tabular}{ c c c c c c c}
\hline\hline
Algorithm & LRR&SSC&LSA&LRSC& SCC&ARM\\
\hline
2 Subjects &&&&&&\\
 Mean&2.54&1.86&32.80&5.32&16.62&\textbf{1.51}\\
Median&0.78&\textbf{0.00}&47.66&4.69&7.82&0.78\\
\hline
3 Subjects &&&&&&\\
 Mean&4.21&3.10&52.29&8.47&38.16&\textbf{2.26}\\
Median&2.60&\textbf{1.04}&50.00&7.81&39.06&1.56\\
\hline
5 Subjects &&&&&&\\
Mean&6.90&4.31&58.02&12.24&58.90&\textbf{3.06}\\
Median&5.63&\textbf{2.50}&56.87&11.25&59.38&\textbf{2.50}\\
\hline
8 Subjects &&&&&&\\
 Mean&14.34&5.85&59.19&23.72&66.11&\textbf{3.70}\\
Median&10.06&4.49&58.59&28.03&64.65&\textbf{3.32}\\
\hline
10 Subjects &&&&&&\\
Mean&22.92&10.94&60.42&30.36&73.02&\textbf{3.85}\\
Median&23.59&5.63&57.50&28.75&75.78&\textbf{2.97}\\
\hline
\end{tabular}}
\label{table:face}
\end{sc}
\end{small}
\end{center}
\vskip -0.1in
\end{table}
 Table~\ref{table:face} provides the best performance of each method. As shown in the table, our proposed method has the lowest mean clustering error rates in all five settings. In particular, in the most challenging case of 10 subjects, the mean clustering error rate is as low as 3.85$\%$. The improvement is significant compared with other low rank representation based subspace clustering, i.e., LRR and LRSC. For example, 19$\%$ and 11$\%$ improvement over LRR can be observed in the cases of 10 and 8 subjects, respectively. This demonstrates the importance of accurate rank approximation. In addition, the error of LSA is large maybe because LSA is based on MSE. Since the MSE is quite sensitive to outliers, LSA will fail to deal with large outliers.   

\begin{figure}[ht]
\vskip 0.2in
\begin{center}
\centerline{\includegraphics[width=\columnwidth]{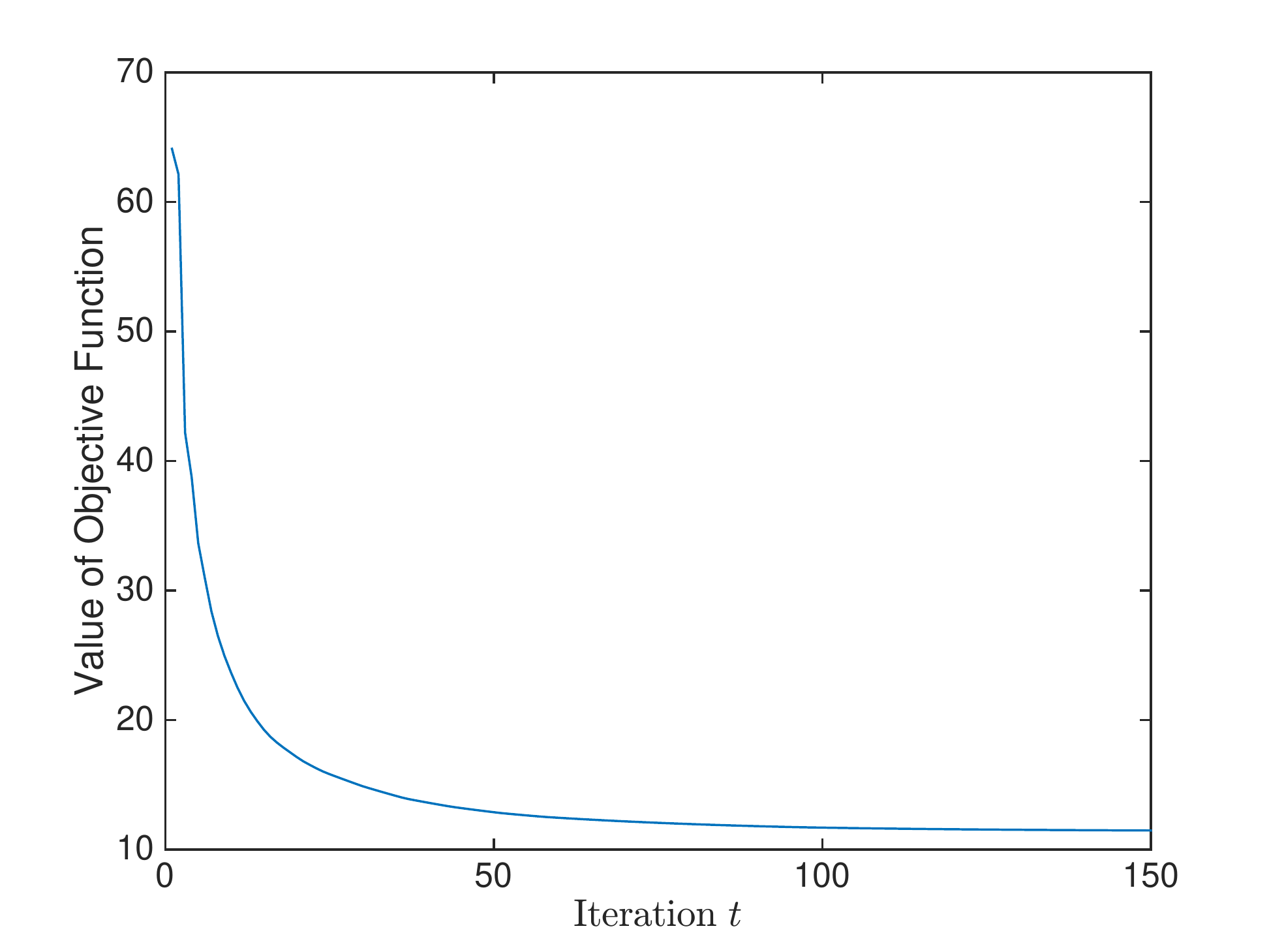}}
\caption{  Convergence curve of the objective function value in (\ref{original}).}
\label{func}
\end{center}
\vskip -0.2in
\end{figure} 

\begin{figure}[ht]
\vskip 0.2in
\begin{center}
\centerline{\includegraphics[width=\columnwidth]{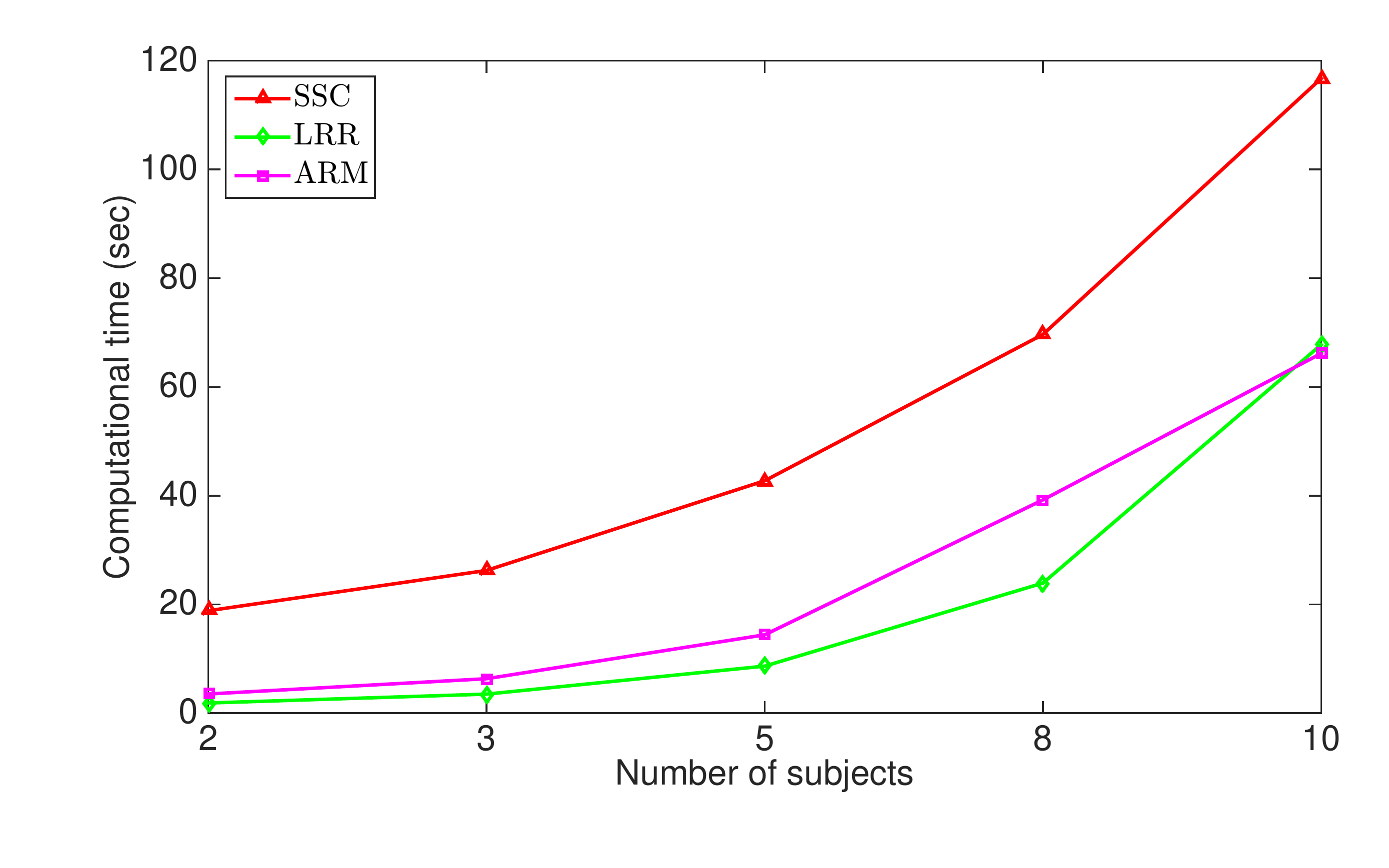}}
\caption{ Average computational time (sec) of the algorithms on the EYaleB database as a function of the number of subjects.}
\label{time}
\end{center}
\vskip -0.2in
\end{figure} 

Figure~\ref{10subject} shows the obtained affinity graph matrix $W$ for the five and ten subjects scenarios. We can see a distinct block-diagonal structure, which means that each cluster becomes highly compact and different subjects are well separated.
\begin{figure}[htb]
\vskip 0.2in
\begin{center}
\includegraphics[width=0.4\columnwidth]{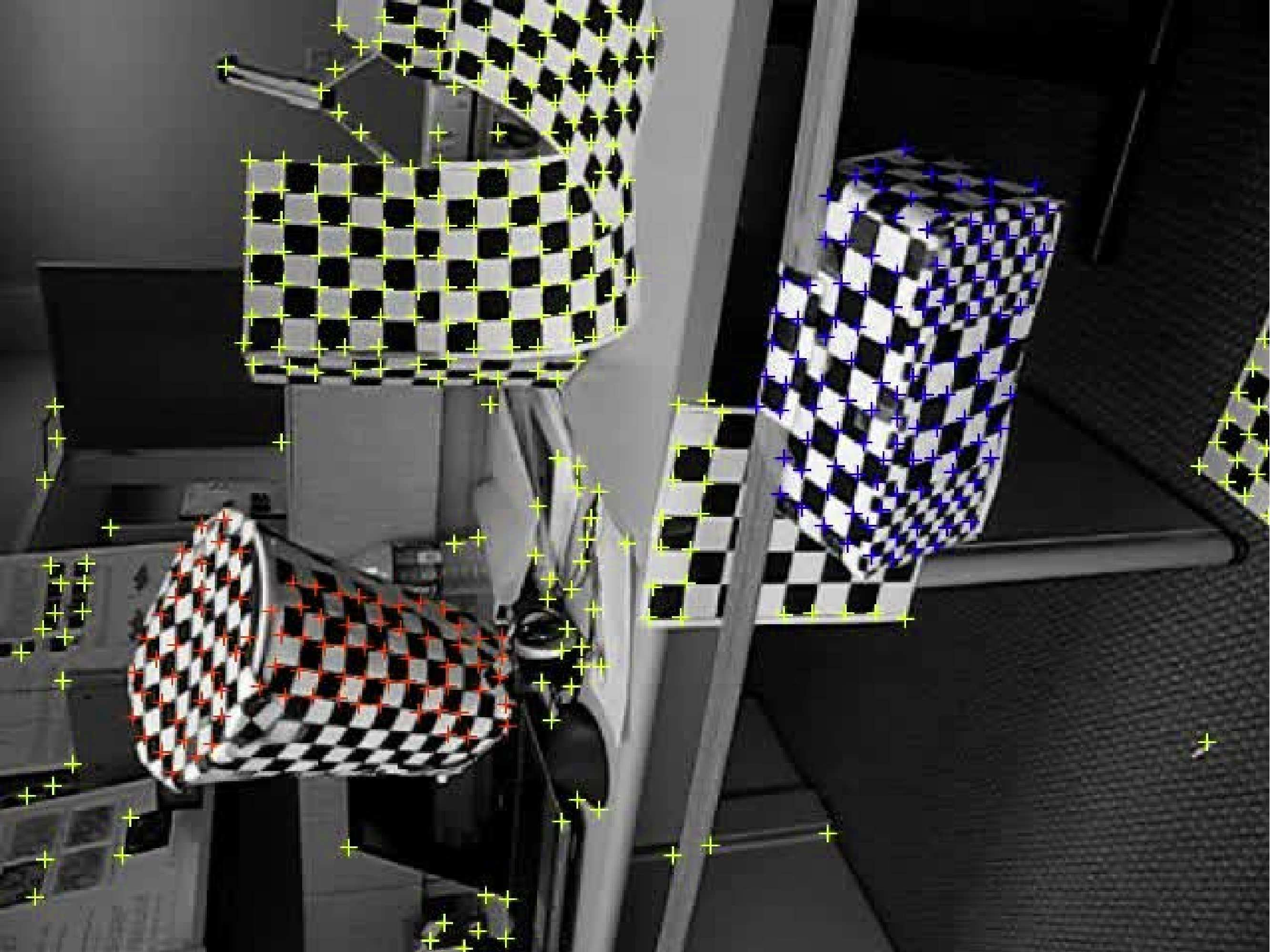}
\includegraphics[width=0.4\columnwidth]{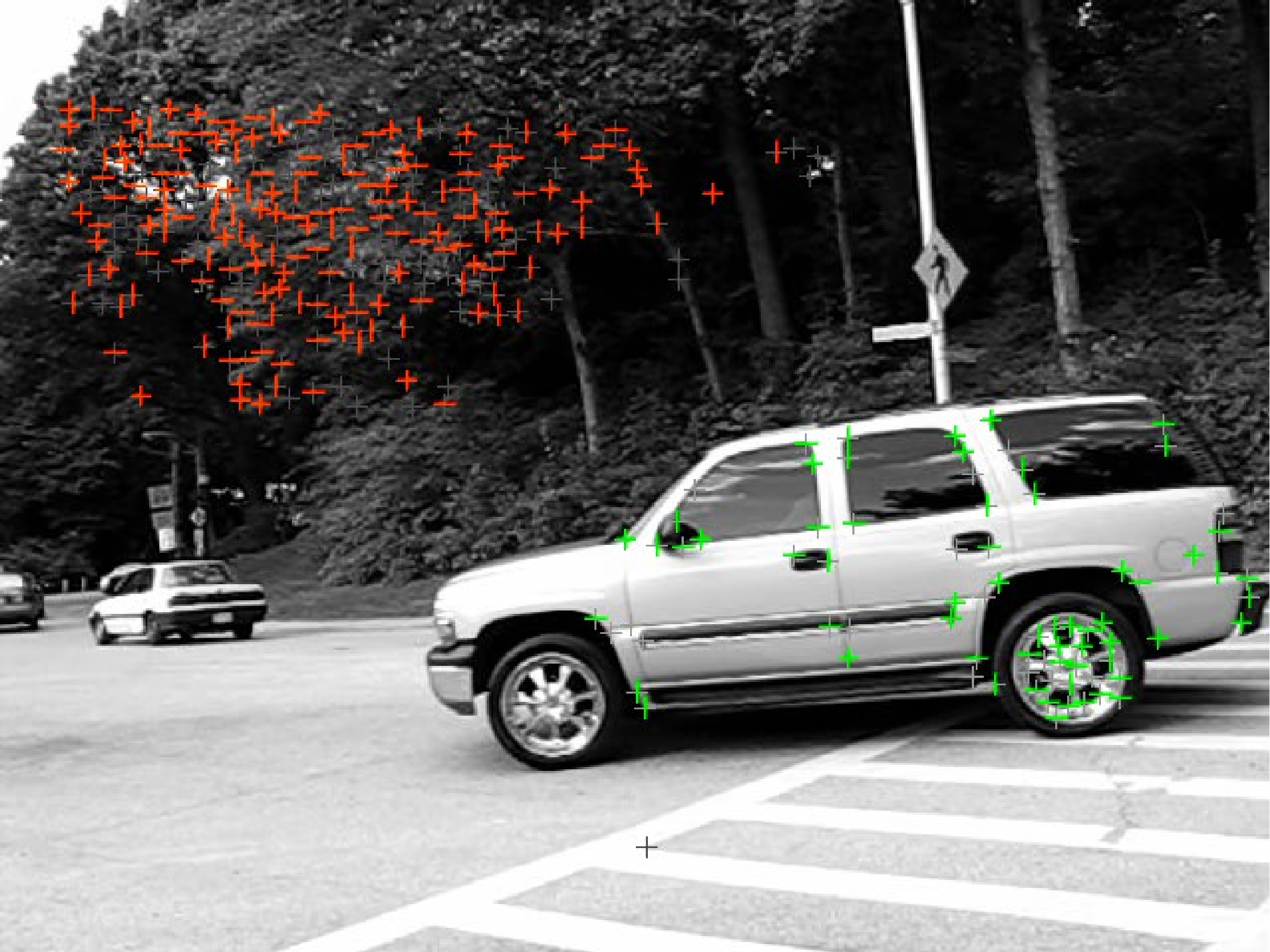}\\
\includegraphics[width=0.4\columnwidth]{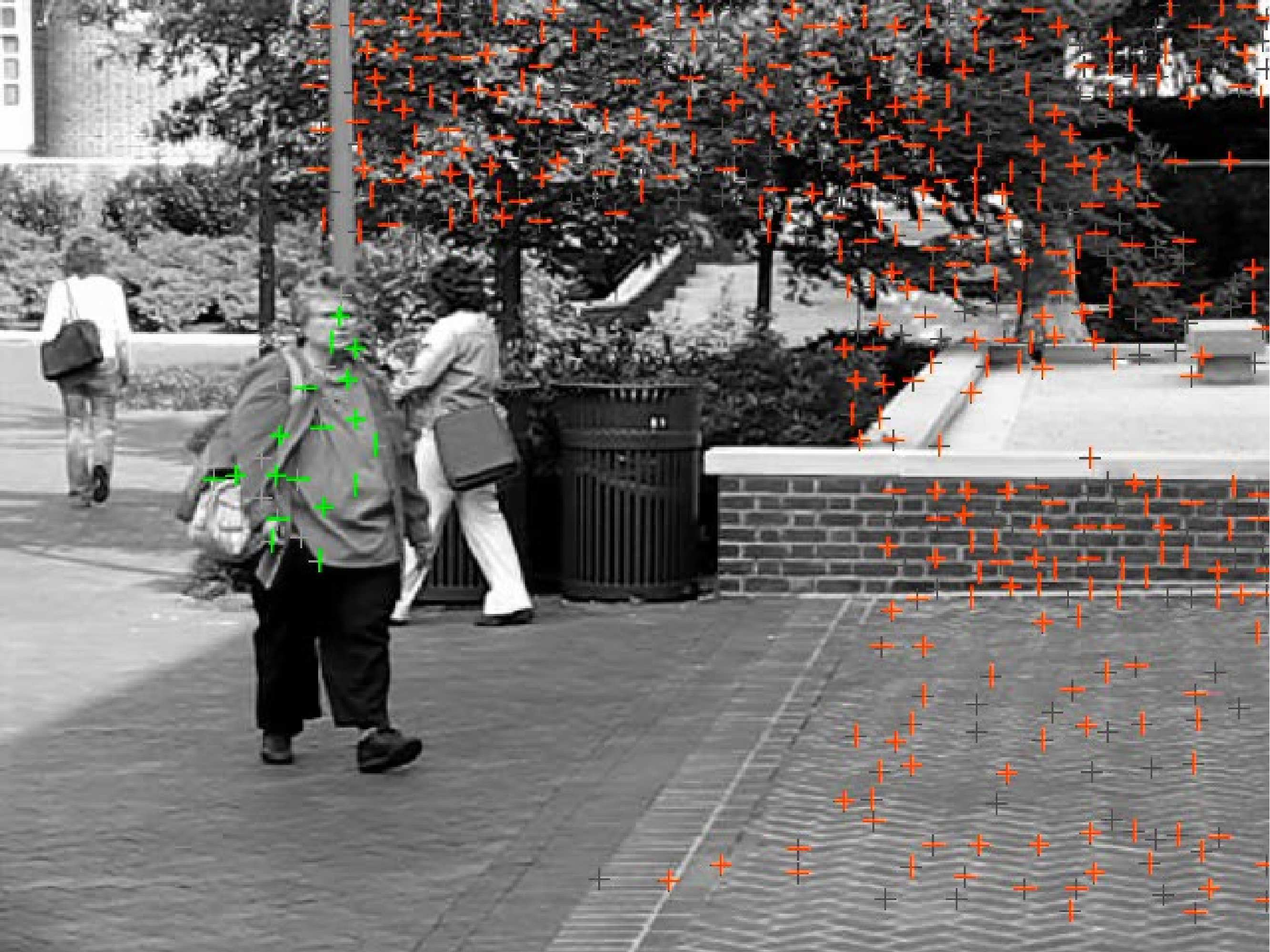}
\includegraphics[width=0.4\columnwidth]{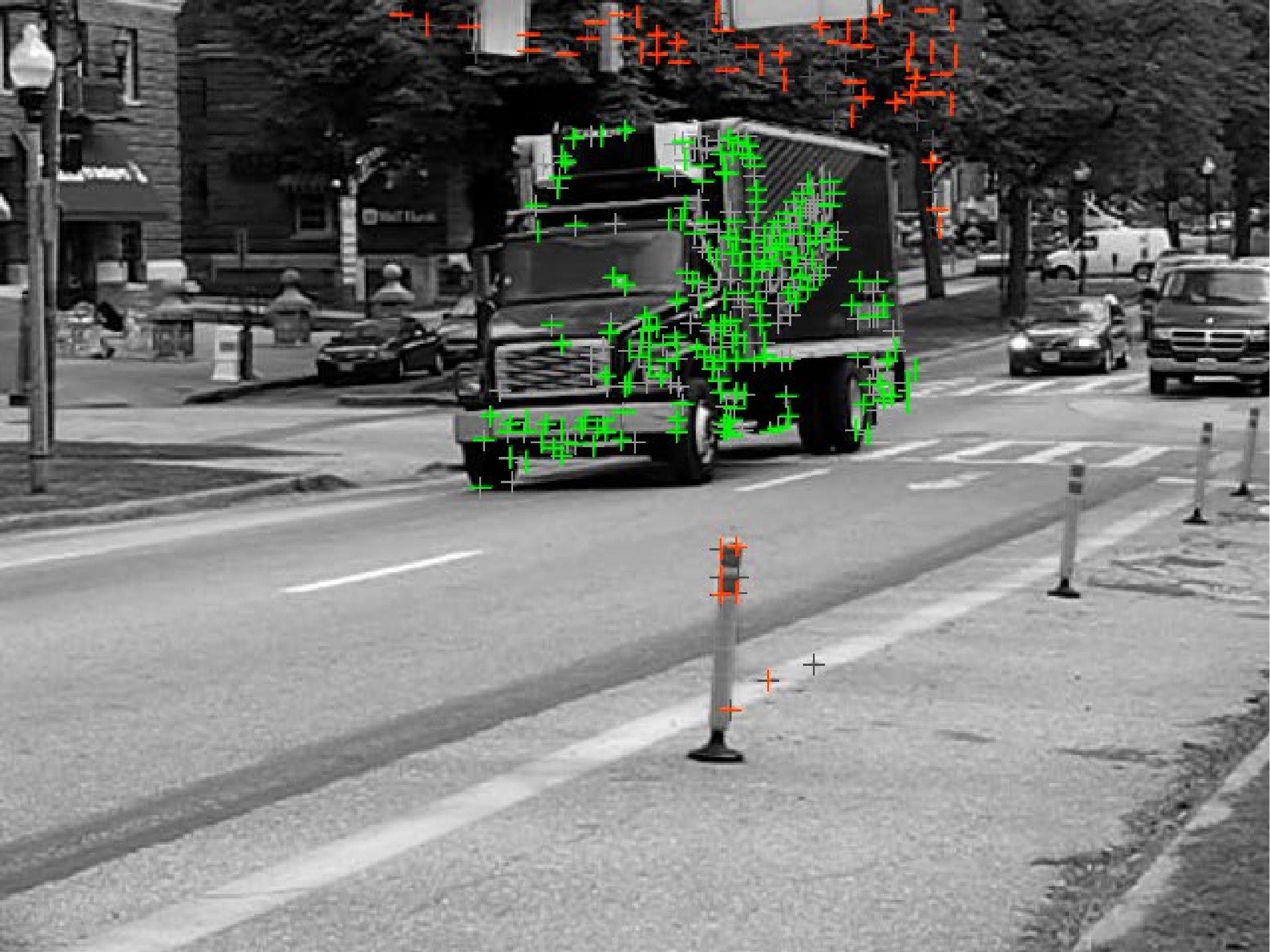}
\caption{Example frames from two video sequences of the Hopkins 155 database with traced feature points.}
\label{motionsample}
\end{center}
\vskip -0.2in
\end{figure}

In Figure~\ref{faceerror}, we present the recovery results of some sample faces from the 10-subject clustering case. We can see that the proposed algorithm has the benefit of removing the corruptions in data.

Figure~\ref{func} plots the progress of objective function values of (\ref{original}). It is observed that with more iterations, the value of objective function decreases monotonically. This empirically verifies the convergence of our optimization method.

We compare the average computational time of LRR, SSC, and ARM as a function of the number of subjects in Figure~\ref{time}. All the experiments are conducted and timed on the same machine with an Intel Xeon E3-1240 3.40GHz CPU that has 4 cores and 8GB memory, running Ubuntu and Matlab (R2014a). We can observe that the computational time of SSC is higher than LRR and ARM, while ARM is a little slower than LRR in most cases.
\subsection{Motion Segmentation}

\begin{table}[htb]
\caption{Segmentation error rates (\%) on the Hopkins 155 Dataset.}
\vskip 0.15in
\begin{center}
\begin{sc}
\resizebox{.45\textwidth}{!}{
\begin{tabular}{ c c c c c c c}
\hline\hline
Algorithm & LRR&SSC&LSA&LRSC&SCC&ARM  \\
\hline
2 Motions&&&&&&\\
 Mean&2.13&1.52&4.23&3.69&2.89&\textbf{1.48}  \\
Median&\textbf{0.00}&\textbf{0.00}&0.56&0.29&\textbf{0.00}&\textbf{0.00} \\
\hline
3 Motions &&&&&&\\
Mean&4.03&4.40&7.02&7.69&8.25&\textbf{1.49}  \\
Median&1.43&0.56&1.45&3.80&\textbf{0.24}&0.84  \\
\hline
All &&&&&&\\
Mean&2.56&2.18&4.86&4.59&4.10&\textbf{1.48} \\
Median&\textbf{0.00}&\textbf{0.00}&0.89&0.60&\textbf{0.00}&\textbf{0.00}  \\
\hline

\end{tabular}}
\label{table:motion}
\end{sc}
\end{center}
\vskip -0.1in
\end{table}

Motion segmentation involves segmenting a video sequence of multiple moving objects into multiple spatiotemporal regions corresponding to different motions. These motion sequences can be divided into three main categories: checkerboard, traffic, and articulated or non-rigid motion sequences. The Hopkins 155 dataset includes 155 video sequences of 2 or 3 motions, corresponding to 2 or 3 low-dimensional subspaces of the ambient space. Each sequence represents a data set and so there are 155 motion segmentation problems in total. Several example frames are shown in Figure~\ref{motionsample}. The trajectories are extracted automatically by a tracker, so they are slightly corrupted by noise. As in \cite{liu2013robust,liu2010robust}, $\left\|E\right\|_{2, 1}$ is adopted in the model. In this experiment, $\lambda=2$, $\mu^0=10$ and $\gamma=1.05$.

We use the original 2F-dimensional feature trajectories in our experiment. We show the clustering error rates of different algorithms in Table~\ref{table:motion}. ARM outperforms other algorithms in mean error rate. Especially, its all mean error rates are around $1.5\%$. This again demonstrates the effectiveness of using arctangent as a rank approximation.
\begin{figure}[ht]
\vskip 0.2in
\begin{center}
\centerline{\includegraphics[width=\columnwidth]{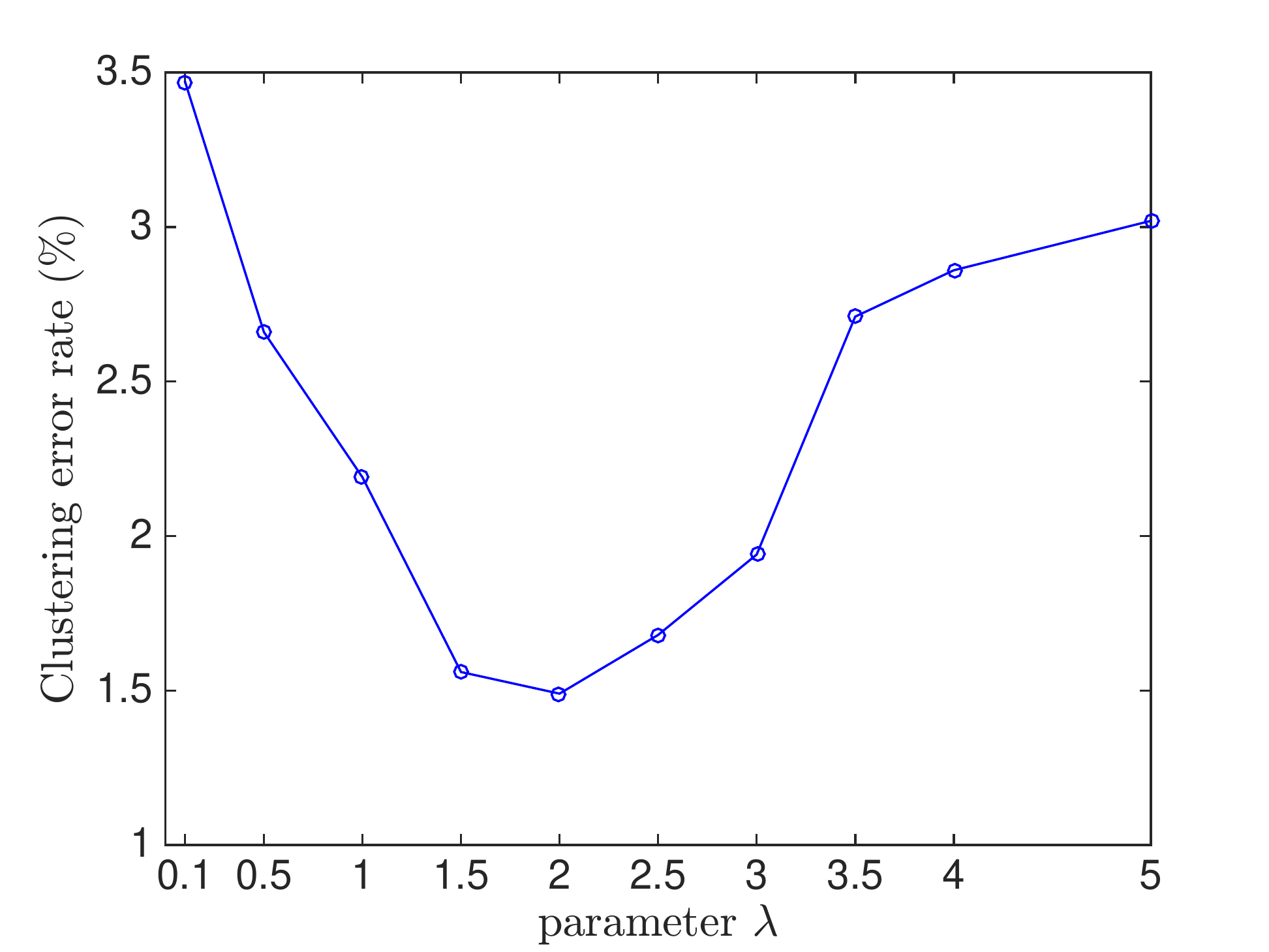}}
\caption{ The influence of parameter $\lambda$ of ARM on clustering error of Hopkins 155 database.}
\label{parametersense}
\end{center}
\vskip -0.2in
\end{figure}

Figure \ref{parametersense} shows the culstering error rate of ARM for different $\lambda$ over all 155 sequences. When $\lambda$ is between 1 and 3, the clustering error rate varies between $1.48\%$ and $2.19\%$. This demonstrates that ARM performs well under a pretty wide range of values of $\lambda$. This is another advantage of ARM over LRR \cite{liu2013robust}.
\section{Conclusion}
In this work, we propose to use arctangent as a concave rank approximation function. It has some nice properties compared with the standard nuclear norm. We apply this function to the low rank representation-based subspace clustering problem and develop an iterative algorithm for optimizing the associated objective function. Extensive experimental results demonstrate that, compared to many state-of-the-art algorithms, the proposed algorithm gives the lowest clustering error rates on many benchmark datasets. This fully demonstrates the significance of accurate rank approximation. Interesting future work includes other applications of the arctangent rank approximation; for example, matrix completion. Since LRR can only ensure its validity for independent subspace segmentation, it is worthwhile to investigate somewhat dependent yet possibly disjoint subspace clustering.
\vfill\eject
\section{Acknowledgments}
This work is supported by US National Science
Foundation Grants IIS 1218712. The corresponding author is Qiang Cheng.

\bibliographystyle{abbrv}
\bibliography{scref}  

\begin{thebibliography}{10}

\bibitem{beck2009fast}
A.~Beck and M.~Teboulle.
\newblock A fast iterative shrinkage-thresholding algorithm for linear inverse
  problems.
\newblock {\em SIAM Journal on Imaging Sciences}, 2(1):183--202, 2009.

\bibitem{cai2010singular}
J.-F. Cai, E.~J. Cand{\`e}s, and Z.~Shen.
\newblock A singular value thresholding algorithm for matrix completion.
\newblock {\em SIAM Journal on Optimization}, 20(4):1956--1982, 2010.

\bibitem{candes2009exact}
E.~J. Cand{\`e}s and B.~Recht.
\newblock Exact matrix completion via convex optimization.
\newblock {\em Foundations of Computational mathematics}, 9(6):717--772, 2009.

\bibitem{candes2010power}
E.~J. Cand{\`e}s and T.~Tao.
\newblock The power of convex relaxation: Near-optimal matrix completion.
\newblock {\em Information Theory, IEEE Transactions on}, 56(5):2053--2080,
  2010.

\bibitem{chen2009spectral}
G.~Chen and G.~Lerman.
\newblock Spectral curvature clustering (scc).
\newblock {\em International Journal of Computer Vision}, 81(3):317--330, 2009.

\bibitem{clarke1990optimization}
F.~H. Clarke.
\newblock {\em Optimization and nonsmooth analysis}, volume~5.
\newblock Siam, 1990.

\bibitem{elhamifar2009sparse}
E.~Elhamifar and R.~Vidal.
\newblock Sparse subspace clustering.
\newblock In {\em Computer Vision and Pattern Recognition, 2009. CVPR 2009.
  IEEE Conference on}, pages 2790--2797. IEEE, 2009.

\bibitem{elhamifar2010clustering}
E.~Elhamifar and R.~Vidal.
\newblock Clustering disjoint subspaces via sparse representation.
\newblock In {\em Acoustics Speech and Signal Processing (ICASSP), 2010 IEEE
  International Conference on}, pages 1926--1929. IEEE, 2010.

\bibitem{elhamifar2013sparse}
E.~Elhamifar and R.~Vidal.
\newblock Sparse subspace clustering: Algorithm, theory, and applications.
\newblock {\em Pattern Analysis and Machine Intelligence, IEEE Transactions
  on}, 35(11):2765--2781, 2013.

\bibitem{favaro2011closed}
P.~Favaro, R.~Vidal, and A.~Ravichandran.
\newblock A closed form solution to robust subspace estimation and clustering.
\newblock In {\em Computer Vision and Pattern Recognition (CVPR), 2011 IEEE
  Conference on}, pages 1801--1807. IEEE, 2011.

\bibitem{fazel2002matrix}
M.~Fazel.
\newblock {\em Matrix rank minimization with applications}.
\newblock PhD thesis, PhD thesis, Stanford University, 2002.

\bibitem{gong2012multi}
P.~Gong, J.~Ye, and C.-s. Zhang.
\newblock Multi-stage multi-task feature learning.
\newblock In {\em Advances in Neural Information Processing Systems}, pages
  1988--1996, 2012.

\bibitem{horst1999dc}
R.~Horst and N.~V. Thoai.
\newblock Dc programming: overview.
\newblock {\em Journal of Optimization Theory and Applications}, 103(1):1--43,
  1999.

\bibitem{hu2013fast}
Y.~Hu, D.~Zhang, J.~Ye, X.~Li, and X.~He.
\newblock Fast and accurate matrix completion via truncated nuclear norm
  regularization.
\newblock {\em Pattern Analysis and Machine Intelligence, IEEE Transactions
  on}, 35(9):2117--2130, 2013.

\bibitem{kanatani2001motion}
K.~Kanatani.
\newblock Motion segmentation by subspace separation and model selection.
\newblock In {\em Computer Vision, 2001. ICCV 2001. Proceedings. Eighth IEEE
  International Conference on}, volume~2, pages 586--591, 2001.

\bibitem{kang2015logdet}
Z.~Kang, C.~Peng, J.~Cheng, and Q.~Cheng.
\newblock Logdet rank minimization with application to subspace clustering.
\newblock {\em Computational Intelligence and Neuroscience}, 2015, 2015.

\bibitem{lauer2009spectral}
F.~Lauer and C.~Schnorr.
\newblock Spectral clustering of linear subspaces for motion segmentation.
\newblock In {\em Computer Vision, 2009 IEEE 12th International Conference on},
  pages 678--685. IEEE, 2009.

\bibitem{lee2005acquiring}
K.-C. Lee, J.~Ho, and D.~Kriegman.
\newblock Acquiring linear subspaces for face recognition under variable
  lighting.
\newblock {\em Pattern Analysis and Machine Intelligence, IEEE Transactions
  on}, 27(5):684--698, 2005.

\bibitem{lewis2005nonsmooth}
A.~S. Lewis and H.~S. Sendov.
\newblock Nonsmooth analysis of singular values. part i: Theory.
\newblock {\em Set-Valued Analysis}, 13(3):213--241, 2005.

\bibitem{lin2011linearized}
Z.~Lin, R.~Liu, and Z.~Su.
\newblock Linearized alternating direction method with adaptive penalty for
  low-rank representation.
\newblock In {\em Advances in neural information processing systems}, pages
  612--620, 2011.

\bibitem{liu2013robust}
G.~Liu, Z.~Lin, S.~Yan, J.~Sun, Y.~Yu, and Y.~Ma.
\newblock Robust recovery of subspace structures by low-rank representation.
\newblock {\em Pattern Analysis and Machine Intelligence, IEEE Transactions
  on}, 35(1):171--184, 2013.

\bibitem{liu2010robust}
G.~Liu, Z.~Lin, and Y.~Yu.
\newblock Robust subspace segmentation by low-rank representation.
\newblock In {\em Proceedings of the 27th International Conference on Machine
  Learning (ICML-10)}, pages 663--670, 2010.

\bibitem{liu2011exact}
G.~Liu, H.~Xu, and S.~Yan.
\newblock Exact subspace segmentation and outlier detection by low-rank
  representation.
\newblock In {\em AISTATS}, pages 703--711, 2012.

\bibitem{lu2014generalized}
C.~Lu, J.~Tang, S.~Y. Yan, and Z.~Lin.
\newblock Generalized nonconvex nonsmooth low-rank minimization.
\newblock In {\em IEEE International Conference on Computer Vision and Pattern
  Recognition}. IEEE, 2014.

\bibitem{ma2007segmentation}
Y.~Ma, H.~Derksen, W.~Hong, and J.~Wright.
\newblock Segmentation of multivariate mixed data via lossy data coding and
  compression.
\newblock {\em Pattern Analysis and Machine Intelligence, IEEE Transactions
  on}, 29(9):1546--1562, 2007.

\bibitem{mohan2012iterative}
K.~Mohan and M.~Fazel.
\newblock Iterative reweighted algorithms for matrix rank minimization.
\newblock {\em The Journal of Machine Learning Research}, 13(1):3441--3473,
  2012.

\bibitem{nasihatkon2011graph}
B.~Nasihatkon and R.~Hartley.
\newblock Graph connectivity in sparse subspace clustering.
\newblock In {\em Computer Vision and Pattern Recognition (CVPR), 2011 IEEE
  Conference on}, pages 2137--2144. IEEE, 2011.

\bibitem{ng2002spectral}
A.~Y. Ng, M.~I. Jordan, Y.~Weiss, et~al.
\newblock On spectral clustering: Analysis and an algorithm.
\newblock {\em Advances in neural information processing systems}, 2:849--856,
  2002.

\bibitem{nie2012low}
F.~Nie, H.~Huang, and C.~H. Ding.
\newblock Low-rank matrix recovery via efficient schatten p-norm minimization.
\newblock In {\em AAAI}, 2012.

\bibitem{rao2010motion}
S.~Rao, R.~Tron, R.~Vidal, and Y.~Ma.
\newblock Motion segmentation in the presence of outlying, incomplete, or
  corrupted trajectories.
\newblock {\em Pattern Analysis and Machine Intelligence, IEEE Transactions
  on}, 32(10):1832--1845, 2010.

\bibitem{recht2010guaranteed}
B.~Recht, M.~Fazel, and P.~A. Parrilo.
\newblock Guaranteed minimum-rank solutions of linear matrix equations via
  nuclear norm minimization.
\newblock {\em SIAM review}, 52(3):471--501, 2010.

\bibitem{recht2011null}
B.~Recht, W.~Xu, and B.~Hassibi.
\newblock Null space conditions and thresholds for rank minimization.
\newblock {\em Mathematical programming}, 127(1):175--202, 2011.

\bibitem{shi2000normalized}
J.~Shi and J.~Malik.
\newblock Normalized cuts and image segmentation.
\newblock {\em Pattern Analysis and Machine Intelligence, IEEE Transactions
  on}, 22(8):888--905, 2000.

\bibitem{soltanolkotabi2012geometric}
M.~Soltanolkotabi, E.~J. Candes, et~al.
\newblock A geometric analysis of subspace clustering with outliers.
\newblock {\em The Annals of Statistics}, 40(4):2195--2238, 2012.

\bibitem{tron2007benchmark}
R.~Tron and R.~Vidal.
\newblock A benchmark for the comparison of 3-d motion segmentation algorithms.
\newblock In {\em Computer Vision and Pattern Recognition, 2007. CVPR'07. IEEE
  Conference on}, pages 1--8. IEEE, 2007.

\bibitem{vidal2010tutorial}
R.~Vidal.
\newblock A tutorial on subspace clustering.
\newblock {\em IEEE Signal Processing Magazine}, 28(2):52--68, 2010.

\bibitem{vidal2014low}
R.~Vidal and P.~Favaro.
\newblock Low rank subspace clustering (lrsc).
\newblock {\em Pattern Recognition Letters}, 43:47--61, 2014.

\bibitem{wang2013noisy}
Y.-X. Wang and H.~Xu.
\newblock Noisy sparse subspace clustering.
\newblock In {\em Proceedings of The 30th International Conference on Machine
  Learning}, pages 89--97, 2013.

\bibitem{xiang2013efficient}
S.~Xiang, X.~Tong, and J.~Ye.
\newblock Efficient sparse group feature selection via nonconvex optimization.
\newblock In {\em Proceedings of the 30th International Conference on Machine
  Learning (ICML-13)}, pages 284--292, 2013.

\bibitem{yan2006general}
J.~Yan and M.~Pollefeys.
\newblock A general framework for motion segmentation: Independent,
  articulated, rigid, non-rigid, degenerate and non-degenerate.
\newblock In {\em Computer Vision--ECCV 2006}, pages 94--106. Springer, 2006.

\bibitem{yang2009fast}
J.~Yang, W.~Yin, Y.~Zhang, and Y.~Wang.
\newblock A fast algorithm for edge-preserving variational multichannel image
  restoration.
\newblock {\em SIAM Journal on Imaging Sciences}, 2(2):569--592, 2009.

\bibitem{zhang2012nonconvex}
Z.~Zhang and B.~Tu.
\newblock Nonconvex penalization using laplace exponents and concave
  conjugates.
\newblock In {\em Advances in Neural Information Processing Systems}, pages
  611--619, 2012.

\bibitem{zhao2012approximation}
Y.-B. Zhao.
\newblock An approximation theory of matrix rank minimization and its
  application to quadratic equations.
\newblock {\em Linear Algebra and its Applications}, 437(1):77--93, 2012.

\end{thebibliography}

\appendix
\section{Proof} 
\begin{theorem}
\label{firsthm}
For $\mu>0$ and $A\in \mathbf{\mathcal{R}}^{m\times n} $
, the following problem
\begin{equation}
Z^*=\argmin_Z F(Z)+\frac{\mu}{2}\left\|Z-A\right\|_F^2
\label{theoremprob}
\end{equation}
is solved by the vector minimization 
\begin{equation}
\label{vector}
\sigma^*=\argmin_{\sigma\geq0} f(\sigma)+\frac{\mu}{2}\|\sigma-\sigma_A\|_2^2,
\end{equation}
so that $Z^* = U diag(\sigma^*) V^T $ with the SVD of $A$ being \\$U diag(\sigma_A^*) V^T$. 
\end{theorem}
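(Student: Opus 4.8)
The plan is to reduce the matrix problem to the vector problem \eqref{vector} by exploiting the unitary invariance of $F$ together with von Neumann's trace inequality. First I would expand the quadratic penalty using the fact that the Frobenius norm depends only on the singular values,
\begin{equation*}
\|Z-A\|_F^2 = \sum_i \sigma_i(Z)^2 - 2\,Tr(Z^T A) + \sum_i \sigma_i(A)^2,
\end{equation*}
so that, since $F(Z)=f(\sigma(Z))$ is itself a function of $\sigma(Z)$ alone, the only part of the objective that is sensitive to the singular \emph{vectors} of $Z$ (rather than to its singular values) is the inner product $Tr(Z^T A)$. This isolates exactly the quantity that von Neumann's inequality controls.

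Second, von Neumann's trace inequality gives $Tr(Z^T A)\le \sum_i \sigma_i(Z)\,\sigma_i(A)$, with the singular values listed in decreasing order and equality precisely when $Z$ and $A$ admit a simultaneous SVD with the same orthogonal factors $U,V$. Substituting this bound and recalling $F(Z)=f(\sigma(Z))$ yields, for every $Z$,
\begin{equation*}
F(Z)+\frac{\mu}{2}\|Z-A\|_F^2 \;\ge\; f(\sigma(Z))+\frac{\mu}{2}\|\sigma(Z)-\sigma_A\|_2^2 \;\ge\; f(\sigma^*)+\frac{\mu}{2}\|\sigma^*-\sigma_A\|_2^2,
\end{equation*}
where the last inequality is the definition of $\sigma^*$ in \eqref{vector}. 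Thus the matrix objective is bounded below by the optimal value of the vector problem. I would then verify that this bound is attained at $Z^*=U\,diag(\sigma^*)\,V^T$: since $Z^*$ and $A$ are assembled from the same factors $U,V$, the trace inequality is tight, so the matrix objective at $Z^*$ equals the vector optimum, and combined with the lower bound this certifies $Z^*$ as a global minimizer.

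The delicate step, and the one I expect to be the main obstacle, is the ordering hidden in the equality case of von Neumann's inequality: tightness requires the entries of $\sigma^*$ to be arranged in the same decreasing order as $\sigma_A$, so that $diag(\sigma^*)$ is genuinely the singular-value matrix of $Z^*$ aligned with that of $A$. I would settle this using that $f$ is absolutely symmetric. For any candidate $\sigma$, permuting its entries leaves $f(\sigma)$ unchanged, while by the rearrangement inequality $\|\sigma-\sigma_A\|_2^2$ is smallest for the permutation that sorts $\sigma$ concordantly with $\sigma_A$. Hence there is a minimizer $\sigma^*$ of \eqref{vector} whose entries are ordered like $\sigma_A$, and $\sigma^*\ge 0$ by the constraint, so $U\,diag(\sigma^*)\,V^T$ is a legitimate SVD and the equality case of the trace inequality genuinely applies, closing the argument.
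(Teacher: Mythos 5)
Your proof is correct and follows essentially the same route as the paper's: reduce to the diagonal case via unitary invariance of $F$ and of the Frobenius norm, apply von Neumann's trace inequality to the cross term $Tr(Z^TA)$, and verify that the resulting lower bound is attained at $Z^*=U\,diag(\sigma^*)\,V^T$. Your handling of the ordering subtlety in the equality case (using the absolute symmetry of $f$ and the rearrangement inequality to produce a minimizer $\sigma^*$ sorted concordantly with $\sigma_A$) is actually more careful than the paper's argument, which passes over this point silently.
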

\begin{proof}
Let $A=U\Sigma_{A}V^T$ be the skinny SVD of $A$, then $\Sigma_{A}=U^TAV$. Denoting $X=U^TZV$ which has exactly the same singular values as $Z$, we have
\begin{flalign}
&F(Z)+\frac{\mu}{2}\|Z-A\|_F^{2}&\label{lower}\\
&= F(X)+\frac{\mu}{2}\|X-\Sigma_A\|_{F}^{2},&\label{unitary}\\
&= F(\Sigma_X)+\frac{\mu}{2}\|X-\Sigma_A\|_{F}^{2},&\label{inva}\\
&= F(\Sigma_X)+\frac{\mu}{2}\left(\|X\|_F^2+\|\Sigma_A\|_F^2-2\left\langle X,\Sigma_A\right\rangle \right),&\label{von}\\
&\geq  F(\Sigma_X)+\frac{\mu}{2}\left(\|\Sigma_X\|_F^2+\|\Sigma_A\|_F^2-2\left\langle \Sigma_X,\Sigma_A\right\rangle \right),&\label{eight}\\
&= F(\Sigma_X)+\frac{\mu}{2}\|\Sigma_X-\Sigma_A\|_F^{2},&\\
&= F(\Sigma_Z)+\frac{\mu}{2}\|\Sigma_Z-\Sigma_A\|_F^{2},&\label{ten}\\
&= f(\sigma)+\frac{\mu}{2}\|\sigma-\sigma_{A}\|_2^2,&\label{eleven}\\
&\ge f(\sigma^*) + \frac{\mu}{2} \|\sigma^* - \sigma_{ A}\|_2^2. 
\end{flalign}
In the above, (\ref{unitary}) holds because the Frobenius norm is unitarily invariant; (\ref{inva}) holds because $F(X)$ is unitarily invariant; (\ref{eight}) is true by von Neumann's trace inequality; and (\ref{ten}) holds because of the definition of $X$.  Therefore, (\ref{ten}) is a lower bound of (\ref{lower}). Note that the equality in (\ref{eight}) is attained if $X = \Sigma_X$. Because $\Sigma_Z = \Sigma_X = X = U^T Z V$, the SVD of $Z$ is $Z = U \Sigma_Z V^T$. By minimizing (\ref{eleven}), we get $\sigma^*$. Therefore, eventually we get 
$Z^* = U diag(\sigma^*) V^T$, which  is the minimizer of problem (\ref{theoremprob}). 
\end{proof}
\vfill\eject
\section{Theorem and Lemmas}
\begin{theorem} \cite{lewis2005nonsmooth}
Suppose $F: \mathbf{\mathcal{R}}^{m\times n}\rightarrow \mathbf{\mathcal{R}}$ is represented as $F(X)=f \circ \sigma(X)$, and $f :\mathbf{\mathcal{R}}^{ n}\rightarrow \mathbf{\mathcal{R}}$ is absolutely symmetric and differentiable, where $X\in\mathbf{\mathcal{R}}^{m\times n} $ with SVD 
 $X=U diag(\sigma) V^T$, the gradient of $F(X)$ at $X$ is
\begin{equation}
\frac{\partial F(X)}{\partial X}=U diag(\theta) V^T,
\label{takederi}
\end{equation}
where $\theta=\frac{\partial f(y)}{\partial y}|_{y=\sigma (X)}$.
\end{theorem}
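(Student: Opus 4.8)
The plan is to prove this through the orthogonal equivariance of the spectral function $F=f\circ\sigma$, reducing the gradient computation at an arbitrary $X$ to a computation at a diagonal matrix. The essential enabling fact is that absolute symmetry of $f$ makes $F$ invariant under left and right multiplication by orthogonal matrices: since $\sigma(PXQ^T)=\sigma(X)$ for every $P\in O(m)$ and $Q\in O(n)$, permutation and sign invariance of $f$ gives $F(PXQ^T)=F(X)$. I would take this invariance as the backbone of the argument.

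First I would record the consequence of this invariance for the gradient. Assuming $F$ is differentiable at $X$, differentiating the identity $F(PXQ^T)=F(X)$ in $X$ with $P,Q$ fixed and pairing against a perturbation $dX$ in the Frobenius inner product gives $P^T\,\nabla F(PXQ^T)\,Q=\nabla F(X)$, i.e. $\nabla F(PXQ^T)=P\,\nabla F(X)\,Q^T$. Thus $\nabla F$ transforms under orthogonal changes exactly as $X$ does. Writing the SVD $X=U\,diag(\sigma)\,V^T$ and applying this identity with $P=U^T$ and $Q=V^T$ yields $\nabla F(\Sigma)=U^T\,\nabla F(X)\,V$, where $\Sigma=diag(\sigma)=U^TXV$; equivalently $\nabla F(X)=U\,\nabla F(\Sigma)\,V^T$. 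Hence it suffices to compute the gradient at the diagonal matrix $\Sigma$.

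Next I would compute $\nabla F(\Sigma)$ directly, entry by entry. For a diagonal perturbation $tE_{ii}$, the singular values of $\Sigma+tE_{ii}$ are, up to sign (absorbed by absolute symmetry of $f$) and reordering (absorbed by permutation symmetry), just $\sigma+te_i$, so $F(\Sigma+tE_{ii})=f(\sigma+te_i)+o(t)$ and the $(i,i)$ entry of the gradient is $\partial f/\partial\sigma_i=\theta_i$. For an off-diagonal perturbation $tE_{ij}$, a two-by-two analysis of the affected block shows that the relevant squared singular values have a fixed product and a sum that increases only by $t^2$, so the singular values change only to second order in $t$; the corresponding entry of $\nabla F(\Sigma)$ therefore vanishes. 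Together these give $\nabla F(\Sigma)=diag(\theta)$ with $\theta=\nabla f(\sigma)$, and substituting into the equivariance formula produces $\nabla F(X)=U\,diag(\theta)\,V^T$.

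The hard part is not the algebra above but establishing that $F$ is genuinely differentiable at $X$ in the first place, especially when $X$ has repeated or zero singular values: there the per-block analysis becomes delicate because individual singular values need not be differentiable functions of $X$ even though the symmetric combination $F$ is. The clean way to handle this is a reduction to Lewis's symmetric-eigenvalue gradient theorem: embed $X$ in the Jordan--Wielandt matrix $\hat{X}=\left(\begin{smallmatrix}0&X\\ X^T&0\end{smallmatrix}\right)$, whose eigenvalues are $\pm\sigma_i$ together with extra zeros, translate the absolute symmetry of $f$ into ordinary permutation symmetry of the induced function on this signed spectrum, invoke the known eigenvalue gradient formula, and finally read off $U$ and $V$ from the block structure of the eigenvectors of $\hat{X}$. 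This route inherits differentiability from the symmetric case and sidesteps the repeated-singular-value difficulties that would otherwise be the main obstacle.
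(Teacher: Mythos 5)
The paper offers no proof of this statement: it is imported verbatim from the cited reference (Lewis--Sendov, nonsmooth analysis of singular values), so there is no in-paper argument to compare yours against. Judged on its own, your outline follows the standard route and is essentially sound. The orthogonal-equivariance step, $\nabla F(PXQ^T)=P\,\nabla F(X)\,Q^T$, and the resulting reduction to computing $\nabla F$ at the (rectangular) diagonal matrix $\Sigma$ are correct, with the small caveat that you must use the full SVD with square orthogonal $U\in O(m)$ for $P=U^T$ to be a legitimate orthogonal factor; the skinny $U$ the paper writes will not do. The diagonal-direction computation is fine. The one place your sketch is literally wrong is the off-diagonal claim that ``the singular values change only to second order in $t$'': for the perturbed block $\bigl(\begin{smallmatrix}\sigma_i & t\\ 0 & \sigma_j\end{smallmatrix}\bigr)$ this holds only when $\sigma_i\neq\sigma_j$; when $\sigma_i=\sigma_j=\sigma$ the two singular values are $\sigma\pm|t|/2+O(t^2)$, a first-order and nondifferentiable change (and similarly for perturbations into the zero rows of $\Sigma$ when the paired singular value vanishes). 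What rescues the directional derivative there is the symmetry of $f$, which forces $\theta_i=\theta_j$ at coincident singular values and, via absolute symmetry, $\theta_i=0$ at $\sigma_i=0$, so the first-order contributions cancel. You correctly identify that this degenerate case, together with differentiability of $F$ itself, is the real content, and your proposed fix --- the Jordan--Wielandt embedding reducing to the symmetric eigenvalue gradient theorem --- is exactly how the cited source handles it. One further observation worth making: differentiability of an absolutely symmetric $f$ forces $\partial f/\partial x_i=0$ wherever $x_i=0$, whereas the paper's $f(x)=\sum_i\arctan(|x_i|)$ is assigned derivative $1$ at the origin; so as the theorem is actually invoked in the paper it is a limiting-gradient statement rather than a classical one. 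That is a defect of the paper's usage, not of your argument.
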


\begin{lemma}
\cite{beck2009fast} For $\mu>0$, and $K\in\mathbf{\mathcal{R}}^{s\times t}$, the solution of the problem  
\begin{equation*}
\min_L\vspace{.2cm} \mu\|L\|_1+\frac{1}{2}\|L-K\|_F^2, 
\end{equation*}
is given by $L_\mu(K)$, which is defined component-wisely by
\begin{equation*}
[L_{\mu}(K)]_{ij}=max\{|K_{ij}|-\mu,0\}\cdot sign(K_{ij}).
\end{equation*}
\end{lemma}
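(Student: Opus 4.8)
The plan is to exploit the complete separability of the objective over the entries of $L$ and then to solve the resulting one-dimensional problem through its subdifferential optimality condition. Writing the objective as a sum over entries,
\[
\mu\|L\|_1+\frac{1}{2}\|L-K\|_F^2=\sum_{i,j}\Big(\mu|L_{ij}|+\frac{1}{2}(L_{ij}-K_{ij})^2\Big),
\]
I note that each summand depends only on the single variable $L_{ij}$, so the joint minimization decouples into $st$ independent scalar problems. It therefore suffices to solve, for a fixed real $k=K_{ij}$, the scalar problem $\min_{x\in\mathbf{\mathcal{R}}}\phi(x)$ with $\phi(x)=\mu|x|+\frac{1}{2}(x-k)^2$, and then to reassemble the entrywise minimizers into the matrix $L_\mu(K)$.

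Next I would observe that $\phi$ is the sum of a convex nonsmooth term and a strictly convex quadratic, hence strictly convex with a unique minimizer $x^*$, characterized by $0\in\partial\phi(x^*)=\mu\,\partial|x^*|+(x^*-k)$, where $\partial|x|=\{\mathrm{sign}(x)\}$ for $x\neq 0$ and $\partial|0|=[-1,1]$. A three-way case analysis then pins down $x^*$: if $x^*>0$ the condition forces $x^*=k-\mu$, which is admissible only when $k>\mu$; if $x^*<0$ it forces $x^*=k+\mu$, admissible only when $k<-\mu$; and the value $x^*=0$ is optimal precisely when $0\in[-\mu-k,\mu-k]$, i.e. when $|k|\le\mu$. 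These three regimes partition the real line, so exactly one applies to any given $k$.

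Finally I would check that the three branches collapse into the single closed form $x^*=\max\{|k|-\mu,0\}\cdot\mathrm{sign}(k)$: for $k>\mu$ this reads $k-\mu$, for $k<-\mu$ it reads $-(|k|-\mu)=k+\mu$, and for $|k|\le\mu$ it reads $0$, matching the case analysis. Running this over all $(i,j)$ yields $[L_\mu(K)]_{ij}=\max\{|K_{ij}|-\mu,0\}\cdot\mathrm{sign}(K_{ij})$, and uniqueness of each scalar minimizer gives uniqueness of the matrix minimizer. The hard part will be the treatment of the nondifferentiable kink at $x=0$: since the minimizer frequently lands exactly there, the argument must use the subdifferential $\partial|0|=[-1,1]$ rather than an ordinary derivative, and it is the inclusion $0\in\partial\phi(0)$ that produces the shrink-to-zero thresholding behaviour. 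Everything else reduces to routine convexity and bookkeeping.
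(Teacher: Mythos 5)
Your proof is correct: the paper itself gives no proof of this lemma, citing it directly from the referenced work, and your argument (entrywise separability followed by the scalar subdifferential condition $0\in\mu\,\partial|x^*|+(x^*-k)$ with the three-case analysis collapsing to $\max\{|k|-\mu,0\}\cdot\mathrm{sign}(k)$) is exactly the standard derivation of the soft-thresholding operator found in that reference. Nothing is missing; the handling of the kink at $x=0$ via $\partial|0|=[-1,1]$ is the right and necessary step.
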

\begin{lemma} \label{lemma:solve_l2l1} \cite{yang2009fast} Let $H$ be a given matrix. If the optimal solution to
\begin{eqnarray*}
\min_{W} \alpha\left\|W\right\|_{2,1} + \frac{1}{2}\left\|W-H\right\|_F^2
\end{eqnarray*}
is $W^*$, then the $i$-th column of $W^*$ is
\begin{eqnarray*}
[W^*]_{:,i}=\left\{
\begin{array}{ll} \frac{\left\|H_{:,i}\right\|_2-\alpha}{\left\|H_{:,i}\right\|_2}H_{:,i}, & \mbox{if $\left\|H_{:,i}\right\|_2>\alpha$};\\
0, & \mbox{otherwise.}
\end{array}\right.
\end{eqnarray*}
\end{lemma}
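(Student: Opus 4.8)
The plan is to reduce the matrix problem to a single vector problem by exploiting separability, and then dispatch the vector problem with a first-order (subgradient) optimality condition. First I would note that both terms of the objective split across the columns of $W$: by definition $\|W\|_{2,1}=\sum_i\|W_{:,i}\|_2$, and the Frobenius term factorizes as $\|W-H\|_F^2=\sum_i\|W_{:,i}-H_{:,i}\|_2^2$. Since distinct columns are not coupled, the minimization decomposes into independent problems, and it suffices to solve, for a single column with data vector $h:=H_{:,i}$,
\[
\min_{w}\; \alpha\|w\|_2+\tfrac{1}{2}\|w-h\|_2^2 .
\]
Because the quadratic term makes this objective strongly convex, it has a unique minimizer, so any point satisfying the stationarity condition is automatically the global solution.

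For the vector problem I would work with the subdifferential, since $\|w\|_2$ is nonsmooth at the origin. The optimality condition is $0\in\alpha\,\partial\|w\|_2+(w-h)$, where $\partial\|w\|_2=\{w/\|w\|_2\}$ for $w\neq0$ and $\partial\|w\|_2=\{g:\|g\|_2\le1\}$ for $w=0$. If the minimizer satisfies $w\neq0$, the condition rearranges to $h=\bigl(1+\alpha/\|w\|_2\bigr)w$, which forces $w$ to be a strictly positive scalar multiple of $h$; writing $w=c\,h$ and solving the resulting scalar equation gives $c=(\|h\|_2-\alpha)/\|h\|_2$, which is positive (hence consistent with $w\neq0$) exactly when $\|h\|_2>\alpha$. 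If instead $w=0$ is optimal, the condition becomes $h\in\alpha\{g:\|g\|_2\le1\}$, i.e.\ $\|h\|_2\le\alpha$. These two regimes are mutually exclusive and exhaustive, so together they yield
\[
w^*=\begin{cases}\dfrac{\|h\|_2-\alpha}{\|h\|_2}\,h,&\|h\|_2>\alpha,\\[1ex]0,&\|h\|_2\le\alpha,\end{cases}
\]
and reassembling the columns recovers the stated expression for $[W^*]_{:,i}$.

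I expect the only real delicacy to be the treatment of the origin: one must use the set-valued subdifferential there rather than a gradient, and verify that the alignment of the optimal $w$ with $h$ (which makes the problem effectively scalar) emerges from the stationarity condition rather than being assumed. An equivalent and perhaps more transparent route would be to invoke rotational invariance, rotating so that $h$ lies along a coordinate axis, arguing the minimizer lies on the same ray, and reducing to the scalar soft-thresholding problem $\min_{t\ge0}\alpha t+\tfrac{1}{2}(t-\|h\|_2)^2$; but the subgradient computation is self-contained and avoids having to justify that reduction separately.
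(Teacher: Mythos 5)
Your proof is correct, and it is essentially the canonical argument for this result. Note that the paper itself offers no proof of this lemma: it is stated in Appendix~B with a citation to \cite{yang2009fast} and used as a black box to obtain the closed-form $\ell_{2,1}$ error update in Algorithm~1, so there is no in-paper derivation to compare against. Your two steps --- column-wise separability of both $\|W\|_{2,1}=\sum_i\|W_{:,i}\|_2$ and $\|W-H\|_F^2$, followed by the subgradient optimality condition $0\in\alpha\,\partial\|w\|_2+(w-h)$ with the case split between $w\neq 0$ (where $\partial\|w\|_2=\{w/\|w\|_2\}$ forces $w$ to lie on the ray through $h$ and taking norms gives $\|w\|_2=\|h\|_2-\alpha$) and $w=0$ (where the condition reads $\|h\|_2\le\alpha$) --- are exactly how the cited source, and the literature generally, derives this proximal operator. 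You also handle the delicate points properly: strong convexity gives uniqueness, so stationarity suffices; the set-valued subdifferential is used at the origin rather than a gradient; and the boundary case $\|h\|_2=\alpha$ is consistent, since the shrinkage formula then also yields $0$, matching the lemma's ``otherwise'' clause. No gaps.
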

\end{document}